\def\eqref#1{equation~\ref{#1}}
\def\1{\bm{1}}
\DeclareMathAlphabet{\mathsfit}{\encodingdefault}{\sfdefault}{m}{sl}
\SetMathAlphabet{\mathsfit}{bold}{\encodingdefault}{\sfdefault}{bx}{n}
\definecolor{outrageousorange}{rgb}{1.0, 0.43, 0.29}
\newtheorem{definition}{Definition}
\newtheorem{lemma}{Lemma}
\newtheorem{abstraction}{Abstraction}
\newtcolorbox[
auto counter,
crefname={Takeaway}{Takeaways}]%
{takeawaybox}[2][]{
% colframe=outrageousorange,
% colback=lacamlilac!04!white,
% colbacktitle=outrageousorange,
fonttitle=\bfseries,
arc=0pt,outer arc=0pt,
enhanced,
attach boxed title to top left={yshift=-1pt},
boxed title style={arc=0pt,outer arc=0pt},
title=Takeaway \thetcbcounter. #2,#1}
\title{ From Tokens to Lattices: Emergent Lattice Structures in Language Models  }
\author{%
  Bo Xiong \\
  Stanford University, United States \\
  University of Stuttgart, Germany \\
  \texttt{xiongbo@stanford.edu} \\
  \And
  Steffen Staab \\
  University of Stuttgart, Germany \\
  University of Southampton, United Kingdom \\
  \texttt{steffen.staab@ki.uni-stuttgart.de} \\
}
\begin{document}

\maketitle

\begin{abstract}
  Pretrained masked language models (MLMs) have demonstrated an impressive capability to comprehend and encode conceptual knowledge, revealing a lattice structure among concepts. 
  This raises a critical question: \emph{how does this conceptualization emerge from MLM pretraining?}
  In this paper, we explore this problem from the perspective of Formal Concept Analysis (FCA), a mathematical framework that derives concept lattices from the observations of object-attribute relationships. 
  We show that the MLM's objective implicitly learns a \emph{formal context} that describes objects, attributes, and their dependencies, which enables the reconstruction of a concept lattice through FCA. 
  We propose a novel framework for concept lattice construction from pretrained MLMs and investigate the origin of the inductive biases of MLMs in lattice structure learning. 
  Our framework differs from previous work because it does not rely on human-defined concepts and allows for discovering "latent" concepts that extend beyond human definitions. 
  We create three datasets for evaluation, and the empirical results verify our hypothesis.
\end{abstract}

\section{Introduction}

Masked language models (MLMs) \citep{DBLP:conf/naacl/DevlinCLT19} are pretrained to predict a masked token given its bidirectional context. This self-supervised pretraining enables MLMs to encode vast amounts of human and linguistic knowledge \citep{DBLP:journals/corr/abs-2204-06031,DBLP:conf/emnlp/PetroniRRLBWM19}, making them valuable and implicit knowledge bases for a variety of knowledge-intensive tasks \citep{van2019does}. 
A key feature of MLMs is their ability to capture conceptual knowledge \citep{DBLP:conf/acl/WuJJXT23,DBLP:conf/emnlp/PengWHJ0L0022,DBLP:conf/acl/LinN22}, including understanding concepts and their hierarchical relationships, which are essential aspects of human cognition.
Despite substantial evidence \citep{DBLP:conf/acl/WuJJXT23,DBLP:conf/emnlp/PengWHJ0L0022,DBLP:conf/acl/AspillagaMS21,DBLP:conf/iclr/DalviKAD0S22} supporting MLMs' ability to conceptualize, the emergence of this capability from their pretraining objectives remains unclear.

To interpret this phenomenon, one pitfall of work in this field is the tendency to focus exclusively on human-defined concepts and investigate how MLMs identify and learn these predefined concepts. 
This is typically achieved by either classifying \citep{DBLP:conf/acl/WuJJXT23,DBLP:conf/emnlp/PengWHJ0L0022} or clustering \citep{DBLP:conf/acl/AspillagaMS21,DBLP:conf/iclr/DalviKAD0S22} terms and then mapping them to established human-defined ontologies, such as WordNet \citep{DBLP:journals/cacm/Miller95}. 
However, this approach often overlooks "latent" concepts that extend beyond human definitions. As a result, while these approaches may illuminate the \emph{what}—the discovery of human-defined concepts—they do not adequately explain the \emph{how}—the underlying mechanisms by which conceptualization emerges from the MLMs' pretraining.

In this work, we adopt a mathematical formalization of concepts inspired by philosophical frameworks \citep{ganter2005formal}. In this definition, each concept is defined as the abstraction of a collection of objects that share some common attributes. For example, \emph{bird} is defined as a set of objects that \emph{can fly}, \emph{have feather}, \emph{lay eggs}, and so on, while \emph{eagle} is defined as a \emph{subset} of these objects that share a \emph{superset} of these attributes. 
By characterizing concepts as sets of objects and attributes, the partial order relations between concepts are naturally induced from the inclusion relations of the corresponding sets of objects and attributes, which is independent of human-defined structures.  

\begin{figure}
    \centering
    \vspace{-0.2cm}
    \includegraphics[width=\linewidth]{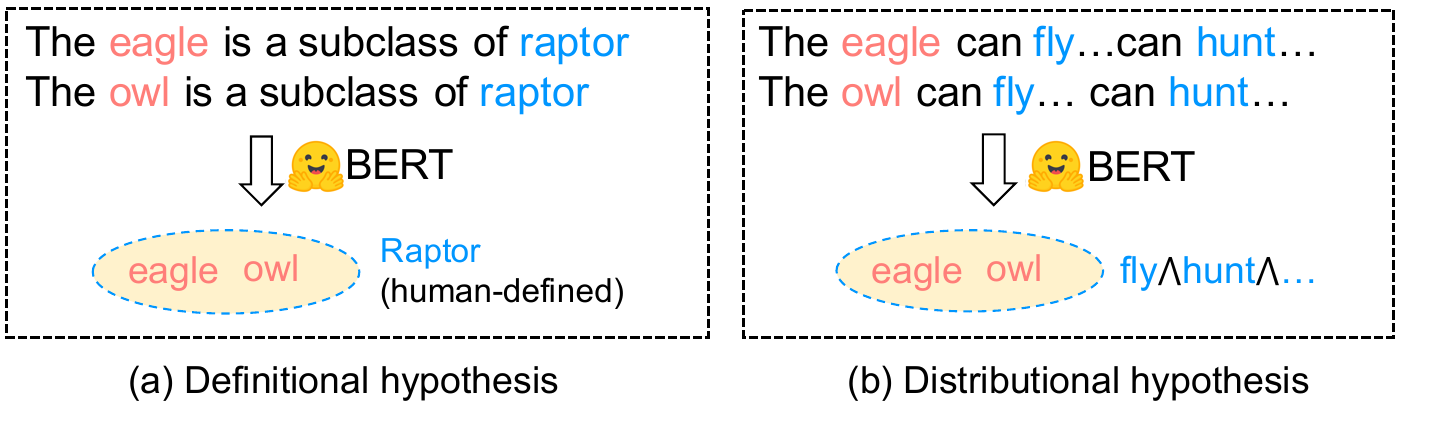}
    \vspace{-0.5cm}
    \caption{\textbf{A comparison of two hypotheses of conceptualization in language models.} (a) \emph{Definitional hypothesis} assumes that concepts are learned directly from the definitions (i.e., concepts are explicitly defined in the texts); (b) \emph{Distributional hypothesis} assumes that concepts are learned from the observations of their attributes (i.e., similar concepts have similar attributes).}
    \label{fig:two_hypotheses}
    \vspace{-0.2cm}
\end{figure}

To investigate the question--\emph{how does the conceptualization emerge from MLM pretraining?} 
One might argue that MLMs are trained on explicit definitions of concepts (e.g., through handbooks), which we call as the \emph{definitional hypothesis}. 
While this hypothesis may seem possible, % \steffen{I do not know why you picked our JAIR article to substantiate this hypothesis. The JAIR article is rather about implicit conceptualization.
% Hearst patterns are the archetype of such definitional works: Automatic acquisition of hyponyms from large text corpora
% MA Hearst - COLING 1992}
as argued by \citep{DBLP:journals/jair/CimianoHS05}, such grounding or conceptualization is typically not frequently found in non-wikipedia and non-handbook texts or conversation. 
Moreover, as shown in Fig. \ref{fig:two_hypotheses}(a), the \emph{definitional hypothesis} relies on human-defined concept names as the text "definition", and cannot handle "latent" concepts whose names are not explicitly defined by human. For example, there is no popularly used term specifying birds that can \emph{swim} and \emph{hunt}, but they are indeed a \emph{formal concept} in the context of FCA. 
Another interpretation is based on the \emph{distributional hypothesis}, which assumes that terms are similar to the extent that they share similar linguistic contexts \citep{DBLP:journals/jair/CimianoHS05}. Therefore, the \emph{distributional hypothesis} of formal concepts can be interpreted as: concepts are similar to the extent that they share similar contexts describing their attributes. 
Fig. \ref{fig:two_hypotheses} shows a comparison of the two hypotheses of concept learning for \emph{eagle} and \emph{owl}. Intuitively, the human-defined concept \emph{raptor} is a rarely used term in texts, while the attributes \emph{fly} and \emph{hunt} are more frequently used in texts. 

In this paper, we follow the \emph{distributional hypothesis} and argue that \emph{the conceptualization of MLMs emerges from the learned dependencies between the latent variables representing the objects and attributes.}
\textbf{We made the following contributions:} 1) We show that MLMs are implicitly doing Formal Concept Analysis (FCA), a mathematical theory of lattices for deriving a concept lattice from the observations of object and attributes. This is interpreted as: MLMs implicitly model objects or attributes as masked tokens, and model the \emph{formal context} by the conditional probability between the objects and attributes under certain patterns. As a result, a concept lattice can be approximately recovered from the conditional probability distributions of tokens given some patterns through FCA; 
2) we propose a novel and efficient formal context construction method from MLMs using probing and FCA; and 
3) we provide theoretical analyses and empirical results on our new datasets that support our findings. Notably, the reconstructed concept lattice contains "latent" concepts that are not defined by human and the evaluation does not require any human-defined concepts and ontologies. 
Our code and datasets will be made available upon acceptance.

\section{Preliminaries}

% \steffen{In Sec 3.2 mathcal(D) is used and not just D. I think I would prefer mathcal(D) also here}
\begin{definition}[masked language model]\label{def:MLM}
     Let $w := [w_1, \cdots, w_{T}]$ denote an input sequence of $T$ tokens,  each taking a value from a vocabulary $V$, and $w_{\setminus t} :=[w_1, \cdots, w_{t-1}, [\mathrm{MASK}], w_{t+1}, \cdots, w_{T}]$ denote the masked sequence after replacing the $t$-th token $w_t$ with a special $[\mathrm{MASK}]$ token.
     A masked language model predicts $w_t$ using its bidirectional context $w_{\setminus t}$. 
     Let $\mathcal{D}$ denote the data generating distribution. MLM learns a probabilistic model $p_{\theta}$ that minimizes the pseudo log-likelihood loss
     \begin{equation}
         \mathrm{PLL}_{\textrm{MLM}} = \mathbb{E}_{w \sim \mathcal{D}, t \sim [1,\cdots, T]  } - \log p_{\theta} (w_t | w_{\setminus t}).
     \end{equation}
\end{definition}

MLMs can be interpreted as non-linear version of Gaussian graphical models \citep{DBLP:journals/corr/abs-1902-04094}. 
The idea is to view tokens as random variables that form a fully-connected undirected graph, i.e., by assuming that each variable is dependent on all the other variables, and learn the dependencies across tokens. The dependencies cross tokens can be identified by conditional mutual information (CMI) \citep{DBLP:journals/corr/abs-1107-1736}, defined as the expected pointwise
mutual information (PMI) conditioned on the rest of the tokens 
% even with latent variables (e.g. topics). 
\begin{equation}
    \displaystyle \mathrm{CMI}_{p_\theta}(w_i;w_j | w_{\setminus i,j}) = \mathbb{E}_{w_i,w_j} [\log p_\theta \left(w_i| w_{\setminus i} \left(j,w_j\right) \right) - \log \mathbb{E}_{w_j|w_i} p_\theta \left(w_i| w_{\setminus i} \left(j,w_j\right) \right) ],
\end{equation}
where $w(i,w_j)$ denotes the sentence substituting $w_i$ with a new token $w_j$.

\textbf{Formal Concept Analysis} \citep{ganter2003formal} is a principled framework for deriving a concept lattice from a collection of objects and their attributes. 
FCA follows the mathematical formalization of concepts where each concept is defined as a collection of objects that share some common attributes \citep{DBLP:journals/jasis/Stock10}. 
% In this case, observation of an object implies the presence of an attribute, and some interpretation of their relation \citep{DBLP:conf/conll/KamphuisS98}.
% (or dually a set of attributes jointly hold by some objects). 
FCA aims to reconstruct a concept lattice from such observations, which are the objects and their corresponding attributes, represented by a formal context, defined as:

\begin{definition}[formal context]
    A formal context is a triple $(G, M, I)$ where $G$ describes a finite set of objects, $M$ describes a finite set of attributes, and $I \subseteq  G \times M$ denotes a binary relation (called \emph{incidence}) between objects and attributes with each element $\langle g,m \rangle \in I$ indicating whether an object $g \in G$ possesses a particular attribute $m \in M$. 
\end{definition}

A formal context can be represented by a matrix with rows and columns being the objects and attributes, respectively, and each element describing whether the object possesses the corresponding attribute. 
A formal concept can be defined as a set of objects $G_1 \subseteq G$ sharing a common set of attributes $M_1 \subseteq M$, formally defined as:

\begin{definition}[formal concept]\label{def:formalconcept}
    Given $G_1\subseteq G$ and $M_1 \subseteq M$, let $G_1^\prime \coloneq \{ m\in M | (g,m) \in I \ \forall g \in G_1 \} $ denote all attributes shared by the objects in $G_1$, and dually $M_1^\prime \coloneq \{ g\in G | (g,m) \in I \ \forall m \in M_1 \} $ denote all objects sharing the attributes in $B$. 
    The pair $(G_1, M_1)$ is a formal concept iff $G_1^\prime = M_1$ and $M_1^\prime = G_1$. $G_1$ and $M_1$ are called the \textit{extent} and \textit{intent} of $(G_1,M_1)$, respectively.
\end{definition}

In a nutshell, the pair $(G_1, M_1)$ is a formal concept iff the set of all attributes shared by the objects in $G_1$ is identical with $M_1$ and dually, the set of objects that share the attributes in $M_1$ is identical with $G_1$. The inclusion between objects/attributes induces a partial order relation of formal concepts.

\begin{definition}[partial order relation]\label{def:partialorderrelation}
    A partial order relation $\le_{C}$ between concepts is defined by 
    \begin{align}
        (G_1, M_1) \le_{C} (G_2, M_2) \Leftrightarrow G_1 \subseteq G_2 \ \text{and} \ M_2 \subseteq M_1.
    \end{align}
\end{definition}
In this sense, top concept is $\top = (G, G^\prime)$ and bottom concept is $\bot = (M^\prime, M)$. That is, for any concept $(G_1,M_1)$, we have $(M^\prime, M)  \le_{C}   (G_1,M_1) \le_{C}  (G, G^\prime)$. FCA can be extended to deal with three-dimensional data where the object-attribute relations depend on certain \emph{patterns}.

\begin{definition}[triadic formal context \citep{DBLP:conf/icdm/JaschkeHSGS06}]
   A \textit{triadic formal context} is a quadruple $(G, M, B, Y)$, where $G$ is a finite set of \textit{objects}, $M$ is a finite set of \textit{attributes}, $B$ is a finite set of \textit{conditions (patterns)}, and $Y \subseteq G \times M \times B$ is a \textit{ternary relation} between objects, attributes, and conditions. 
\end{definition}
An element $(g, m, b) \in Y$ is interpreted as: The object $g \in G$ possesses the attribute $m \in M$ under the condition $b \in B$.  Triadic concept is defined as:
\begin{definition}
A \textit{triadic concept} of the triadic formal context $(G, M, B, Y)$ is a triple $(A_1, A_2, A_3)$ where $A_1 \subseteq G$, $A_2 \subseteq M$, and $A_3 \subseteq B$, and the following property holds: 
   \begin{align}
    (A_1, A_2, A_3) = 
    \big(
    \{g \in G \mid \forall m \in A_2, \forall b \in A_3, (g, m, b) \in Y\}, \\
    \{m \in M \mid \forall g \in A_1, \forall b \in A_3, (g, m, b) \in Y\}, \\
    \{b \in B \mid \forall g \in A_1, \forall m \in A_2, (g, m, b) \in Y\}
    \big),
   \end{align}
    where $A_1$ is the set of objects shared by all attributes in $A_2$ under all conditions in $A_3$, $A_2$ is the set of attributes shared by all objects in $A_1$ under all conditions in $A_3$, $A_3$ is the set of conditions under which all objects in $A_1$ share all attributes in $A_2$.
    % \item Additionally, $(A_1, A_2, A_3)$ is \textit{maximal} with respect to this property, meaning no element can be added to $A_1$, $A_2$, or $A_3$ without violating the defining property.
\end{definition}

 \textbf{Concept Lattice} The set of all concepts and the partial order relation given in a formal context lead to a concept lattice that describes inclusion relationship between their sets of objects and attributes.

\begin{figure*}
    \centering
    \vspace{-0.5cm}
    \includegraphics[width=0.91\linewidth]{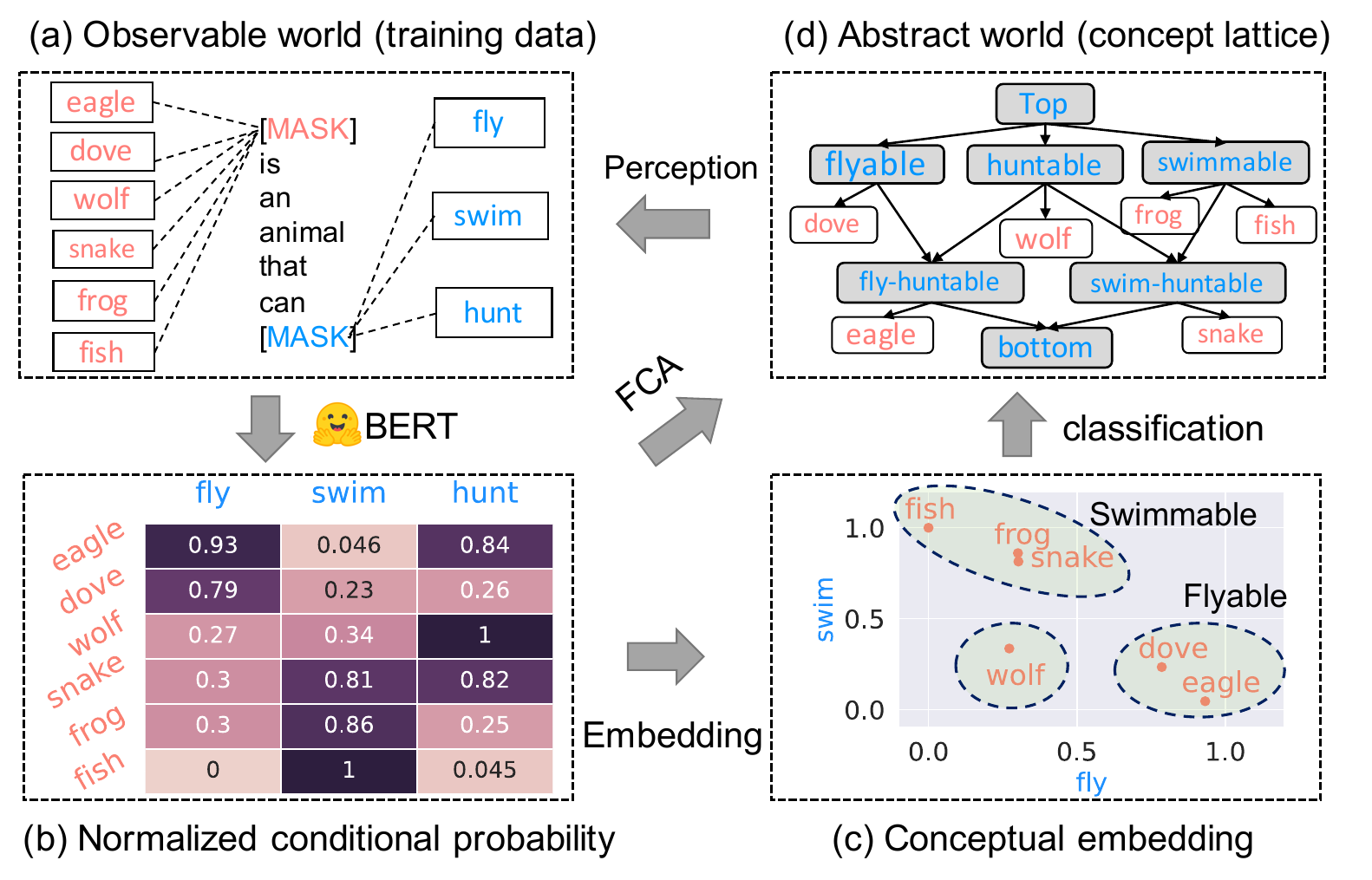}
    \vspace{-0.2cm}
    \caption{\textbf{An illustration of how MLMs learn conceptual/ontological knowledge from natural language}. 
(a) The observation is a set of sentences that describe relationships between objects and attributes (nouns and verbs in this example). Each sentence is abstracted as a filling of a pattern with an object-attribute pair; (b) the normalized conditional probability between objects and attributes is learned by the MLM (e.g., BERT) and it can be viewed as a formal context in a probabilistic space; (c) each row of the conditional probability can be viewed as the conceptual embedding of the object, and each dimension/column of the embeddings corresponds to a particular attribute. These dimensionally interpretable embeddings can be used for concept classification; (d) the abstract world (concept lattice) can be recovered from the learned formal context in (c) through FCA, and the concept lattice can be viewed as a hierarchy for concept classification.}
    \label{fig:bigmap}
    \vspace{-0.6cm}
\end{figure*}

\section{ The Lattice Structure of Masked Language Models }

We argue that MLMs implicitly model concepts by learning a formal context through conditional distributions between objects and attributes in a target domain.
First, we introduce how concept lattices are constructed from MLMs using FCA in Section \ref{sec:construction}.  
Next, we discuss the inductive biases of MLM pretraining as they pertain to the learning of formal contexts in Section \ref{sec:inductive_bias}.

\subsection{ Lattice Construction from Masked Language Models }\label{sec:construction}

The relationship between objects and attributes can be explored by \emph{probing} using a cloze prompt, for example, \emph{"$[\mathrm{MASK}]_g$ is an animal that can $[\mathrm{MASK}]_m$"}, where $[\mathrm{MASK}]_g$ and $[\mathrm{MASK}]_m$ serve as placeholders for objects and attributes, respectively. This type of template, where both object and attribute tokens are masked, is referred to as a concept \emph{pattern}, denoted by $b$. The instantiation of a pattern with a specific object-attribute pair $(g,m)$ is termed the filling of $b$, denoted as $b^{g,m}$. For partial fillings of the pattern, we use $b^{g,\cdot}$ to denote filling with the object $g$ alone, and $b^{\cdot, m}$ for the attribute $m$ alone. Figure \ref{fig:bigmap}a illustrates a concept pattern along with its potential fillings.

\paragraph{Probabilistic triadic formal context}
We estimate the conditional dependencies between objects and attributes using $p_\theta (g| b^{\cdot,m})$ or $p_\theta (m|b^{g,\cdot})$. By cataloguing all possible fillings of objects and attributes within a domain, we construct a conditional probability matrix that serves as an approximation of the formal context for lattice construction. 
The accuracy of this formal context largely depends on the pattern's efficacy in capturing the intended concept domain and its ability to mitigate confounding influences from other factors. 
To address these challenges, we utilize multiple patterns and propose the creation of a \emph{probabilistic triadic formal context}, represented as a three-dimensional tensor. 
% The construction process is defined as follows:

\begin{definition}[construction of probabilistic triadic formal context]\label{def:formalcontextconstruction}
Given a pretrained MLM denoted as $p_\theta$, and considering a set of objects $G$ and a set of attributes $M$ that the model has been trained on (i.e., $G, M \subseteq V$ where $V$ is the vocabulary), and a set of patterns $B$, the triadic formal context $Y$ can be constructed by setting either $Y_{g,m,b} = p_\theta (g| b^{\cdot,m})$ or $Y_{g,m,b} = p_\theta (m|b^{g,\cdot})$, where $g \in G$, $m \in M$, and $b \in B$.
\end{definition}

Fig. \ref{fig:bigmap}b provides a normalized representation of a formal context incidence matrix using a single pattern. However, constructing the complete three-way tensor, which requires calculating the conditional probability among any combination of objects, attributes, and patterns, involves a time complexity of $O(|G||M||B|)$. To address efficiency, we propose a method that requires only $|B| \lceil \frac{|G|}{\beta} \rceil$ or $|B| \lceil \frac{|M|}{\beta} \rceil$ evaluations of the cloze prompt in the MLM, with $\beta$ representing the batch size and $\lceil \cdot \rceil$ the ceiling function. Although the number of possible natural language patterns is infinity in theory, considering the top most informative patterns is sufficient for construction (see Lemma \ref{prop:latent_variables}). Therefore, it is reasonable to assume that $|G| \gg |M| \gg |B|$, rendering the time complexity nearly linear with respect to the number of objects or attributes.
% \steffen{This last sentence may evoke criticism. The number of natural language patterns will be much larger than the number of plausible fillings. Also in the preceding paragraph the back and forth switching between triadic and binary incidences is unclear.}
% \bx{If the pattern is sufficiently "good", then we only need one pattern. }
The efficient construction is formalized as follows:

\begin{definition}[efficient construction]\label{def:efficien_construction}
    Let $p_\theta(\cdot | w_{\setminus t}) \in \mathbb{R}^{|V|} $ denote the predicted conditional probability vector given $w_{\setminus t}$, and given a set of objects $G$, a set of attributes $M$, and a pattern $b$,
    % the triadic formal context $Y$ can be reconstructed by setting $Y_{i,b} = p_\theta (| b_{\setminus g}^{g,m} \ )$.
    we construct $W^{|M| \times T}$ (resp. $W^{|G| \times T}$) such that its $i$th row $W_{i} = b^{\cdot,m_i} $ (resp. $W_{i} = b^{g_i,\cdot} $) denote the partial filling of pattern $b$ with attribute $m_i$ (resp. object $g_i$). 
    Let $\mathrm{ind}_G$ (resp. $\mathrm{ind}_M$ ) be the token indexes of objects $G$ (resp. attributes $M$), the formal context of pattern $b$ can be reconstructed by $\hat{Y}_{\cdot,\cdot,b}$ := $p_\theta (\cdot | W^{|M| \times T} )[\mathrm{ind}_G]$ (resp. $\hat{Y}_{\cdot,\cdot,b}$ := $p_\theta (\cdot | W^{|G| \times T} )[\mathrm{ind}_M]$).
    % where $p_\theta (\cdot | W^{|M| \times T} )[\mathrm{ind}_G]$ (resp. $p_\theta (\cdot | W^{|G| \times T} )[\mathrm{ind}_M]$) denotes the probabilities of attributes.
    % Let $\mathbf{W}^{i_{\mathrm{th}}}$ denote the $i_{\mathrm{th}}$ batch of sequence, and $\mathrm{ind}_G$ be the token indexs of objects $G$, the $i_{\mathrm{th}}$ batch of the formal context can be reconstructed by $I^{i_{\mathrm{th}}}$ = $ \mathrm{LM}_\theta (\mathbf{W}^{i_{\mathrm{th}}})[\mathrm{ind}_G] $.
\end{definition}
The core idea of Def. \ref{def:efficien_construction} is to first extract the conditional probability over all possible tokens (i.e., $p_\theta (\cdot | W^{|M| \times T})$ or $p_\theta (\cdot | W^{|G| \times T})$) in a batch manner, and then index the relevant conditional probabilities using the attribute indices $\mathrm{ind}_M$ or object indices $\mathrm{ind}_G$. This construction assumes that either objects or attributes are included in the vocabulary, while the counterpart may consist of multi-token text.
% One limitation of Def. \ref{def:formalcontextconstruction} and Def. \ref{def:efficien_construction} is their reliance on a pre-defined set of objects and attributes within a domain. 
Alternatively, instead of relying on a pre-defined set of objects and attributes within a domain, one might generate the set of object-attribute pairs in the domain from the distribution. However, direct sampling from the joint distribution via Markov chain Monte Carlo (MCMC) sampling is challenging. By considering that MLMs can function as probabilistic generative models, we employ Gibbs sampling to sample a sequence of object-attribute pairs that can be used to approximate the joint distribution, which does not presuppose a known set of objects and attributes.

\begin{definition}[formal context generation via Gibbs sampling]\label{def:generative_formal_context_construction}
Given a MLM $p_\theta$ and a concept pattern $b$, the process begins by sampling an initial object $g_0$ directly from $p_\theta(\cdot|b)$ and an attribute $m_0$ from $p_\theta(\cdot|b^{g_0,\cdot})$. At each subsequent step $t$ (where $t = 1, 2, \ldots$), a new object $g_t$ is sampled from $p_\theta(\cdot|b^{\cdot,\ m_{t-1}})$ and a new attribute $m_t$ from $p_\theta(\cdot|b^{g_{t-1},\cdot})$. The conditional probabilities for all objects and attributes are then computed based on the sequence $(g_0,m_0), \cdots, (g_t,m_t)$.
\end{definition}

\paragraph{Lattice construction}
The reconstructed probabilistic triadic formal context, denoted as $\hat{Y}$, serves as a smoothed version of the discrete triadic formal context. As shown in Fig. \ref{fig:bigmap}c, the smoothed formal context can be viewed as a conceptual embedding that encodes the object-attribute relation in a probabilistic space. 
We consider its $2$-dimensional projection, which approximates the probabilistic incidence matrix $\hat{I}$. This approximation is achieved by aggregating over multiple patterns, 
either through average pooling $\hat{I}_{g,m} = \frac{\sum_{b \in B} \hat{Y}_{g,m,b}}{|B|}$, or max pooling $\hat{I}_{g,m} = \max_{b \in B} \hat{Y}_{g,m,b}$. 

As MLM outputs are softmax probabilities that may result in limited positive responses, and FCA requires a binary input, 
we apply a min-max normalization approach where, given a threshold $\alpha$, the binarization is performed as follows:
\begin{equation}
I_{g,m} = \frac{\log(\hat{I}_{g,m}) - \min \log (\hat{I})}{\max \log (\hat{I}) - \min \log (\hat{I})} > \alpha.
\end{equation}
After binarization, a concept lattice is constructed by: (1) generating all formal concepts by identifying combinations of objects and attributes that fulfill the closure property defined in Def. \ref{def:formalconcept}, and (2) structuring these formal concepts into a lattice based on the partial order relations defined in Def. \ref{def:partialorderrelation}. 
Fig. \ref{fig:bigmap}d illustrates the resulting lattice with $\alpha=0.5$ and the corresponding objects of each concept.
\footnote{In FCA, concepts are inherently unnamed. Names, if required, must be determined
% \steffen{`determined' not `applied'}
in a separate post-processing phase.}

\subsection{Inductive Biases of Formal Context Learning}\label{sec:inductive_bias}

We introduce the formal context learning under certain abstractions. 
Drawing inspiration from natural language concept analysis \citep{DBLP:conf/conll/KamphuisS98}, implying that conceptualization arises from observing the attributes present in objects, we formalize the world model as a ground-truth formal context defined as $\mathcal{F}_0 := (G_0, M_0, I_0)$. As data captures the perception of the world, we formalize each data point as a "sentence" $x$ describing that an object has a particular attribute, which can be viewed a filling of a concept pattern with an object-attribute pair "sampled" from $\mathcal{F}_0$.

\begin{abstraction}[data generation]\label{abs:data_generation}
Given a formal context $\mathcal{F}_0 := (G_0, M_0, I_0)$ and a set of concept patterns $B$, each data point $x$ is generated by filling a concept pattern sampled i.i.d. from $B$ with an object-attribute pair $(g, m)$ sampled i.i.d. from the formal context $\mathcal{F}_0$. Thus, each data point is represented as $x = b^{g,m}$, where $b \in B$, $g \in G_0$, $m \in M_0$, and $(g,m) \in I_0$.
\end{abstraction}

Under such an abstraction, formal context learning aims to reconstruct the world model (original formal context) from the data points generated by Abstraction \ref{abs:data_generation}. 
Given that real-world data generation is not as perfect as Abstraction \ref{abs:data_generation} and might be influenced by other factors or noises. We hence acknowledge that an exact reconstruction of the original formal context is improbable. Therefore, we focus on $\epsilon$-approximate formal context learning.

\begin{definition}[$\epsilon$-approximate formal context learning]
Given a set of $N$ data points $\mathcal{D} = 
\{w^i\}_{i=1}^{N}$ generated according to Abstraction \ref{abs:data_generation}, $\epsilon$-approximate formal context learning seeks to derive a probabilistic formal context incidence matrix $\hat{I}$ from $\mathcal{D}$, such that the distance $d(\hat{I}, I_0) \le \epsilon$.
\end{definition}

Our hypothesis is that identifying the formal context incidence matrix  from the sentences generated from Abstraction \ref{abs:data_generation} is possible. 
Lemma \ref{lemma:feasibility} establishes the feasibility of identifying the formal context from the generated dataset at the population level.

\begin{lemma}[feasibility]\label{lemma:feasibility}
    Let $\mathcal{D} = \{w^i\}_1^{N}$ be a dataset consisting of data points generated by Abstraction \ref{abs:data_generation}, then there exists an identification algorithm $F$ mapping a dataset $\mathcal{D}$ to a formal context $I$ such that $F(\mathcal{D})$ converges to the ground-truth formal context $I_0$ as $N \rightarrow \infty $ almost surely, i.e., $F(\mathcal{D}) \rightarrow I \approx I_0$. 
\end{lemma}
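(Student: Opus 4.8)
The plan is to exhibit a concrete identification algorithm $F$ based on \emph{empirical support recovery} and to argue its consistency via the strong law of large numbers. The crucial structural observation is that Abstraction~\ref{abs:data_generation} only ever emits positive incidences: every sampled pair $(g,m)$ satisfies $(g,m)\in I_0$, so the data-generating distribution over object--attribute pairs is a probability measure $\pi$ supported on $I_0\subseteq G_0\times M_0$. Identifying $I_0$ therefore amounts to recovering the support of $\pi$ from i.i.d. samples---a positive-only, closed-world reconstruction task.

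First I would define $F$ as follows. From each sentence $w^i = b^{g_i,m_i}$ the algorithm reads off the filled object $g_i$ and attribute $m_i$ (their positions are fixed by the pattern $b$) and forms the empirical joint frequency
\begin{equation}
\hat\pi_N(g,m) = \frac{1}{N}\sum_{i=1}^{N}\1[g_i=g,\; m_i=m].
\end{equation}
The reconstructed context is the thresholded support $F(\mathcal{D})_{g,m} := \1[\hat\pi_N(g,m)>0]$; that is, a pair is declared an incidence exactly when it has been observed at least once.

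Next I would establish consistency. Because $G_0$ and $M_0$ are finite, the pairs $(g,m)$ range over a finite set, and the counts $N\hat\pi_N(g,m)$ are sums of i.i.d. Bernoulli indicators; by the strong law of large numbers $\hat\pi_N(g,m)\to\pi(g,m)$ almost surely, simultaneously for all finitely many pairs. For a pair outside $I_0$ the indicator is identically zero, so $\hat\pi_N(g,m)=0$ for every $N$ and $F(\mathcal{D})_{g,m}=0$; for a pair in $\mathrm{supp}(\pi)$ we have $\pi(g,m)>0$, so by the second Borel--Cantelli lemma the event $\{g_i=g,\,m_i=m\}$ occurs infinitely often almost surely, whence $\hat\pi_N(g,m)>0$ for all large $N$ and $F(\mathcal{D})_{g,m}=1$. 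Taking a union over the finite pair set, $F(\mathcal{D})\to \1[\pi(\cdot)>0]$ almost surely, and $\1[\pi(\cdot)>0]$ is precisely the incidence matrix of $\mathrm{supp}(\pi)$.

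The step I expect to be the main obstacle is closing the gap between $\mathrm{supp}(\pi)$ and $I_0$ itself---exactly the discrepancy flagged by the ``$\approx$'' in the statement. The argument above recovers $I_0$ \emph{exactly} only under a coverage (full-support) assumption, namely that $\pi$ assigns positive probability to every incidence in $I_0$; without it, any incidence that is never generated is indistinguishable from a genuine non-incidence, so $F$ can at best return the sub-context $\mathrm{supp}(\pi)\subseteq I_0$. I would therefore state this coverage condition explicitly (it is the natural reading of ``sampled i.i.d. from $\mathcal{F}_0$'') and, to accommodate the realistic noise anticipated by the $\epsilon$-approximate formulation, replace the hard threshold $>0$ by $\hat\pi_N(g,m)>\tau_N$ with a schedule satisfying $\tau_N\to 0$ yet $N\tau_N\to\infty$, so that genuine low-frequency incidences survive while spurious pairs are filtered out, again yielding $d(F(\mathcal{D}),I_0)\to 0$.
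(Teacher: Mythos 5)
Your proposal is correct and rests on the same elementary idea as the paper's proof---count empirical object--attribute co-occurrences and let a law-of-large-numbers argument do the work---but your consistency argument takes a genuinely different and, in one respect, more careful route. The paper's Algorithm 1 accumulates the same counts, normalizes, and then applies a union bound, Hoeffding's inequality, and the first Borel--Cantelli lemma to conclude that $\frac{1}{N}\sum_{t}Y_{g,m,t}$ converges almost surely to the \emph{binary} entry $(I_0)_{g,m}$; this tacitly requires $\mathbb{E}[Y_{g,m,t}]=(I_0)_{g,m}$, i.e., that every incident pair is emitted with probability one, which cannot hold once $I_0$ contains more than one incidence, and the unspecified ``normalize'' step is left to absorb the discrepancy. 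You instead acknowledge that the empirical frequency converges to the sampling measure $\pi$ rather than to $I_0$, recover $\mathrm{supp}(\pi)$ by thresholding, handle the positive pairs with the second Borel--Cantelli lemma and the negative pairs trivially (they are never emitted under Abstraction~\ref{abs:data_generation}), and then isolate the coverage condition $\pi(g,m)>0$ for all $(g,m)\in I_0$ as precisely the hypothesis hidden in the lemma's ``$\approx$''. What the paper's route buys is an explicit finite-sample tail bound ($2q\exp(-2N\epsilon^2/q^2)$) that connects naturally to the $\epsilon$-approximate formulation; what yours buys is a correct identification of what is actually learnable from positive-only samples and an explicit statement of the assumption under which it coincides with $I_0$. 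Your vanishing-threshold schedule $\tau_N\to 0$ with $N\tau_N\to\infty$ is a sensible addition for noisy generation and has no counterpart in the paper.
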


\vspace{-0.4cm}
\begin{algorithm}[t!]
\setstretch{0.8}
\caption{A Simple Formal Context Learning Algorithm}\label{alg:cap}
\hspace*{\algorithmicindent} \textbf{Input:} A dataset $\mathcal{D}$, a set of objects $G$ and attributes $M$   \\
\hspace*{\algorithmicindent} \textbf{Output:} An estimated formal context incidence matrix $I = [0,1]^{|G| \times |M|}$
\begin{algorithmic}
\State initialize $I = \mathbf{0}^{|G| \times |M|}$ 
\For{$w \in \mathcal{D}$}
    \For{$w_i \in \mathbf{w}$}
        \For{$w_j \in w$}
            \If{$w_i \in G, w_j \in M$}
                \State $I_{ \left( G_{w_i}, M_{w_j } \right) } = I_{ \left( G_{w_i}, M_{w_j } \right) } +  1  $
            \EndIf
        \EndFor
    \EndFor
\EndFor
\State normalization: $I = \mathrm{normalize}(I)$
\State return $I$
\end{algorithmic}
\end{algorithm}
\vspace{0.2cm}

Algorithm \ref{alg:cap} is an example algorithm. A full proof of Lemma \ref{lemma:feasibility} is provided in the appendix, but the core idea is to demonstrate that $I$ converges to $I_0$ almost surely when the number of data points $N \rightarrow \infty$ by using the law of large numbers and concentration inequalities.

\paragraph{MLMs as formal context learner}
The MLM objective can be interpreted as a non-linear version of Gaussian graphical models for latent structure learning \citep{DBLP:conf/nips/MohanCHWLF12}, which resembles Lasso regression that aims to predict $x_i$ from $x_{\setminus i}$ with each nonzero coefficient corresponding to a dependency weight. Hence, we can formalize MLM objective as formal context learning by:

\begin{definition}[masked formal context learning]\label{def:masked formal context learning}
    Let $\mathcal{D} = \{w^i\}_1^{N}$ be a dataset consisting of data points generated by Abstraction \ref{abs:data_generation}, the goal of masked formal context learning is to minimize
    \begin{equation}
         \mathrm{PLL}_{\mathrm{MFCL}} = -  \mathbb{E}_{w \sim D}
         \left( \log p_\theta (g | b^{\cdot,m}) + \log  p_\theta (m | b^{g,\cdot}) ) \right ) 
         - \mathbb{E}_{w \sim D, s \in b_{\setminus {g, m}} }
         \left( \log p_\theta (s | b^{\cdot,\cdot})  \right).
    \end{equation}
\end{definition}

The first two loss terms directly encourage the capture of the dependency between objects and attributes under certain the pattern $b$, i.e., $p_\theta(g| b^{\cdot,m})$ and  $p_\theta(m| b^{g,\cdot})$.
The third loss term implies that the quality of the chosen pattern significantly influences the effectiveness of formal context learning. 
The intuition is that a pattern might convey much irrelevant or confounding information that confuses the learning of object-attribute dependency. 
We characterize this confounding information as latent variable information $\mathbf{Z}$.
Lemma~\ref{prop:latent_variables} shows the significant role of the patterns and the influence of $\mathbf{Z}$.

\begin{lemma}[role of pattern]\label{prop:latent_variables}
    Let $|\mathrm{CMI}_{p_\theta}(g;m |b)|$ denote the conditional MI without latent variables, and $\mathrm{CMI}_{p}(g;m |\mathbf{Z}; b)$ denote the conditional MI with latent variables $\mathbf{Z}$, we have $|\mathrm{CMI}_{p_\theta}(g;m |b)| - \mathrm{CMI}_{p}(g;m |\mathbf{Z}; b) \le 2H(\mathbf{Z}|b)$, where $H(\mathbf{Z}|b)$ is the conditional entropy.
\end{lemma}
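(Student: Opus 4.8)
The plan is to reduce the claim to two elementary information-theoretic facts: the chain rule for conditional mutual information, and the inequality $I(X;\mathbf{Z}\mid C)\le H(\mathbf{Z}\mid C)$ combined with the monotonicity of entropy under conditioning. Throughout I would treat both quantities as ordinary (nonnegative) conditional mutual informations, writing $I(g;m\mid b)$ for $\mathrm{CMI}_{p_\theta}(g;m\mid b)$ and $I(g;m\mid \mathbf{Z},b)$ for $\mathrm{CMI}_{p}(g;m\mid \mathbf{Z};b)$, and absorbing the gap between $p_\theta$ and the ground-truth $p$ into the assumption (consistent with the population-level reading used in Lemma~\ref{lemma:feasibility}) that the model conditionals agree with the true conditionals on the support induced by the pattern $b$.

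First I would expand the joint conditional mutual information $I(g;(m,\mathbf{Z})\mid b)$ by the chain rule in both orders:
\begin{align}
    I(g;(m,\mathbf{Z})\mid b) &= I(g;m\mid b) + I(g;\mathbf{Z}\mid m,b),\\
    I(g;(m,\mathbf{Z})\mid b) &= I(g;\mathbf{Z}\mid b) + I(g;m\mid \mathbf{Z},b).
\end{align}
Equating the two right-hand sides yields the exact identity
\begin{equation}
    I(g;m\mid b) - I(g;m\mid \mathbf{Z},b) = I(g;\mathbf{Z}\mid b) - I(g;\mathbf{Z}\mid m,b),
\end{equation}
which recasts the confounding gap purely in terms of the information $g$ carries about the latent $\mathbf{Z}$.

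Next I would bound the two terms on the right separately. Because mutual information is nonnegative and never exceeds the entropy of either argument, $0\le I(g;\mathbf{Z}\mid b)\le H(\mathbf{Z}\mid b)$; similarly $0\le I(g;\mathbf{Z}\mid m,b)\le H(\mathbf{Z}\mid m,b)\le H(\mathbf{Z}\mid b)$, where the last step uses that conditioning on the extra variable $m$ cannot increase entropy. The triangle inequality applied to the identity then gives $|I(g;m\mid b)-I(g;m\mid \mathbf{Z},b)|\le I(g;\mathbf{Z}\mid b)+I(g;\mathbf{Z}\mid m,b)\le 2H(\mathbf{Z}\mid b)$; this is precisely where the factor of two enters, since bounding the two terms individually is looser than bounding their difference. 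Finally, since $I(g;m\mid \mathbf{Z},b)\ge 0$ the reverse triangle inequality gives $|I(g;m\mid b)|-I(g;m\mid \mathbf{Z},b)\le |I(g;m\mid b)-I(g;m\mid \mathbf{Z},b)|$, and the stated bound follows.

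The hard part will not be the information-theoretic algebra, which is routine, but justifying the interchange of $p_\theta$ and $p$ that is implicit in comparing a model-estimated CMI against a population CMI: the nonstandard CMI of the preliminaries can be negative under an imperfectly calibrated $p_\theta$, so the identification $|I(g;m\mid b)|=I(g;m\mid b)$ needs either the well-specified-model assumption above or an additional additive term controlling $\mathbb{E}\,|\log p_\theta-\log p|$ on the pattern support. A secondary subtlety is pinning down the probabilistic status of $\mathbf{Z}$ — whether it is a deterministic function of the pattern context or a genuine auxiliary latent — since, although the chain-rule identity holds in either case, the interpretation of $H(\mathbf{Z}\mid b)$ as the residual confounding entropy of the pattern depends on this modeling choice.
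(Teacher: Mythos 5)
Your proof is correct and follows essentially the same route as the paper's: both rest on the chain-rule identity $I(g;m\mid b)-I(g;m\mid \mathbf{Z},b)=I(g;\mathbf{Z}\mid b)-I(g;\mathbf{Z}\mid m,b)$ and then bound each term by $H(\mathbf{Z}\mid b)$ (the paper expands into entropies and drops the negative terms; you bound the mutual informations directly by $I\le H$ and use conditioning-reduces-entropy — the same facts). Your closing remarks on the $p_\theta$ versus $p$ identification and on the absolute value are in fact points the paper's proof silently assumes, so your treatment is, if anything, slightly more careful.
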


Lemma~\ref{prop:latent_variables} implies that the difference between $|\mathrm{CMI}_{p_\theta}(g;m |b)|$ and $\mathrm{CMI}_{p}(g;m |\mathbf{Z}; b)$ is bounded by the conditional entropy $H(\mathbf{Z}|b)$. 
This means that if the concept pattern $b$ provides sufficient information that uniquely describes the object-attribute relationship without introducing confounding factors (i.e., $H(\mathbf{Z}|b) \approx 0$ ), the conditional MI directly captures the dependency between objects and attributes. 
This re-affirms the importance of the quality of patterns and their critical role in accurately capturing the object-attribute relations.
A full proof of Lemma~\ref{prop:latent_variables} is detailed in the appendix.

\begin{takeawaybox}[label={takeaway:benchmarks}]

    The inductive bias of MLMs in lattice structure learning originates from the learned conditional dependency between tokens. 
    The precision of the lattice construction depends mainly on the quality of the chosen pattern in exclusively capturing the object-attribute relationship. 
\end{takeawaybox}

% \newpage
\section{ Evaluation }

In this section, we evaluate whether the formal contexts constructed from MLMs align with established gold standards and assess their capability to reconstruct concept lattices. 
We have developed several gold-standard formal context datasets across various domains for empirical analysis (Sec.~\ref{sec:datasets}). Specifically, our investigation addresses two key research questions: 
1) Can the conditional probabilities in MLMs effectively recover formal contexts (Sec.~\ref{sec:formalcontextconstruction})? 
2) Can the reconstructed formal contexts be utilized to construct concept lattices (Sec.~\ref{sec:concept lattices})?
We substantiate our findings with additional ablation studies and case analyses in Sec.~\ref{sec:analysis}.

\begin{table}
 \vspace{-0.4cm}
    \centering
     \caption{The candidate concept patterns used to construct formal contexts for different datasets. }
     \vspace{-0.1cm}
     \renewcommand{\arraystretch}{0.85}
     \resizebox{\columnwidth}{!}
    {
    \begin{tabular}{c|c}
    \toprule
        \textbf{Dataset} & \textbf{A set of concept patterns}  \\
        \midrule
    \multirow{3}{*}{Region-language} & $[\mathrm{MASK}]_m$ is the official language of $[\mathrm{MASK}]_g$\\
         & The official language of $[\mathrm{MASK}]_g$ is $[\mathrm{MASK}]_m$\\
         & $[\mathrm{MASK}]_m$ serves as the official language in $[\mathrm{MASK}]_g$\\
    \midrule
     \multirow{3}{*}{Animal-behavior} & The $[\mathrm{MASK}]_g$ is an animal that can $[\mathrm{MASK}]_m$ \\
         & The $[\mathrm{MASK}]_g$ is a type of animal that has the ability to $[\mathrm{MASK}]_m$ \\
         & An animal known as the $[\mathrm{MASK}]_g$ has the ability to $[\mathrm{MASK}]_m$  \\
    \midrule
     \multirow{3}{*}{Disease-symptom} & The $[\mathrm{MASK}]_g$ is a disease that has symptom of $[\mathrm{MASK}]_m$ \\
         & People who infected the disease $[\mathrm{MASK}]_g$ typically has symptom of $[\mathrm{MASK}]_m$ \\
         & $[\mathrm{MASK}]_m$ is a kind of symptom of the $[\mathrm{MASK}]_g$ \\
    \bottomrule
    \end{tabular}
    }
    \label{tab:patterns}
    % \vspace{-0.5cm}
\end{table}

\subsection{Formal Context Datasets}\label{sec:datasets}

We construct three new datasets of formal contexts in different domains, serving as the gold standards for evaluation. 
Two of them are derived from commonsense knowledge, and the third one is from the biomedical domain. 
1) \textit{Region-language} details the official languages used in different administrative regions around the world. This dataset is extracted from Wiki44k \citep{DBLP:conf/semweb/HoSGKW18}, utilizing the "official language" relation, which is a densely connected part of Wikidata; 
2) \textit{Animal-behavior} captures the behaviors (e.g., \emph{live on land}) of animals (e.g., \emph{tiger}). This dataset is constructed through human curation. We compiled a set of the most popular animal names in English, considering only those with a single token. We identified $25$ behaviors based on animal attributes that aid in distinguishing them, such as habitat preferences, dietary habits, and methods of locomotion; 
3) \textit{Disease-symptom} describes the symptoms associated with various diseases. We extracted diseases represented by a single token and their symptoms from a dataset available on Kaggle\footnote{https://www.kaggle.com/datasets/itachi9604/disease-symptom-description-dataset}.
These datasets vary in terms of density and the nature of objects and attributes. In particular, the \textit{Animal-behavior} dataset is notably denser compared to the other two that are relatively sparse. Additionally, these datasets accommodate cases where objects or attributes can consist of multi-token texts. We design three concept patterns for each dataset by varying the positions of objects and attributes, as given in Table \ref{tab:patterns}. The detailed statistics for these datasets are summarized in Table \ref{tab:datasets} in the appendix.

\paragraph{Implementation and computational resources} 
We instantiate our approach (i.e. Def. \ref{def:efficien_construction}) with BERT and dub it as \textbf{BertLattice}.
We also design a naive baseline model, dubbed as \textbf{BertEmb}., which predicts object-attribute relations by measuring the distance of the last-layer hidden state embedding vectors of objects and attributes.
All experiments were conducted on machines equipped with Nvidia A100 GPU. Our method does not require training of MLMs. 
The extraction of the formal contexts is very fast ($\le 60$ seconds) for all three datasets.

\begin{figure}[t!]
    \centering 
     \vspace{-0.3cm}
    \subfloat[\centering ]{{\includegraphics[width=0.3\columnwidth]{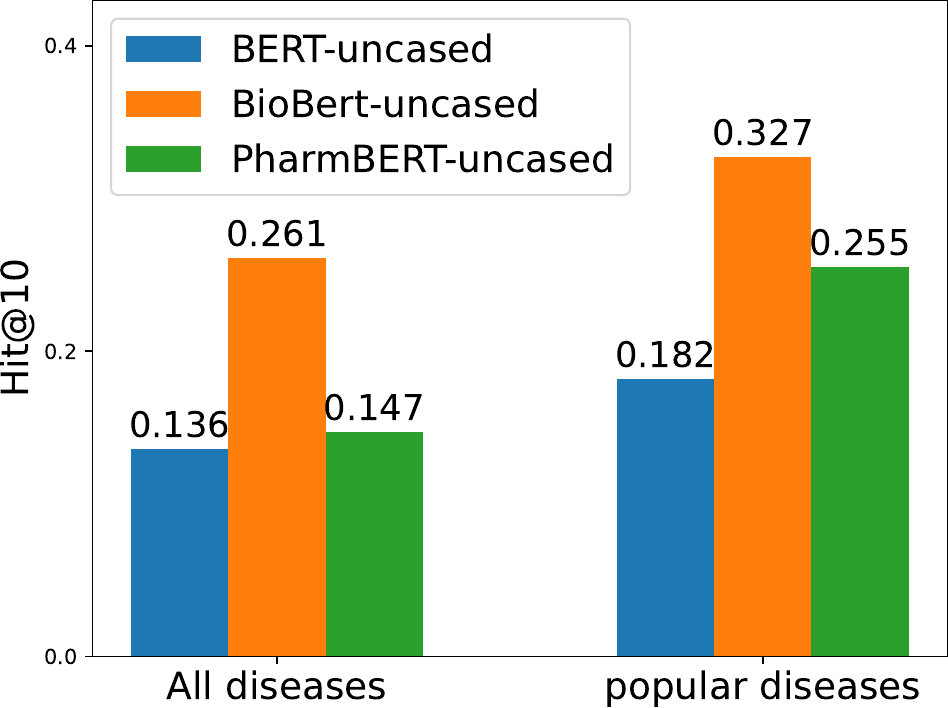}}}
    \qquad
    \subfloat[\centering ]{{\includegraphics[width=0.29\columnwidth]{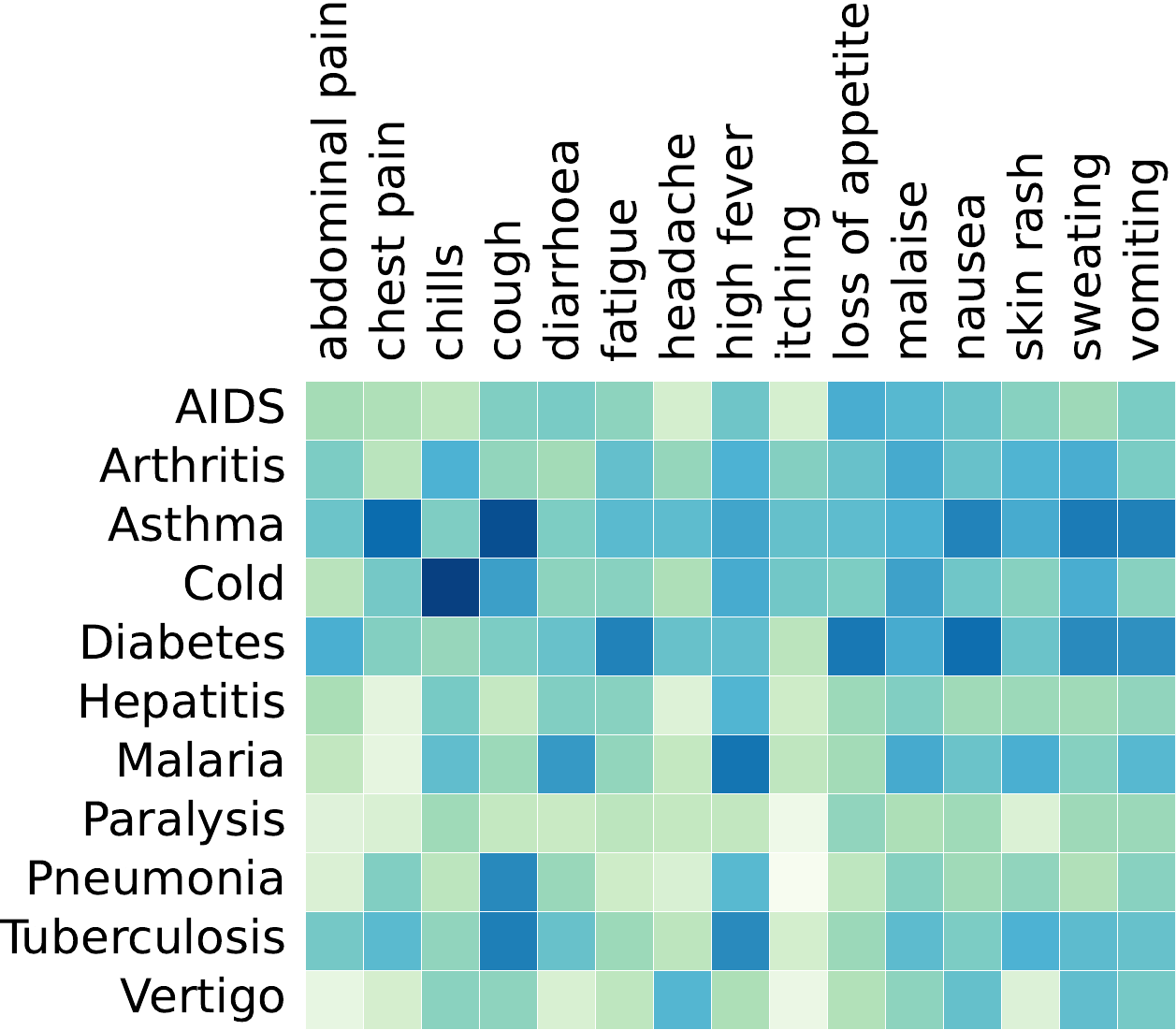}}}
    \qquad
    \subfloat[\centering ]{{\includegraphics[width=0.3\columnwidth]{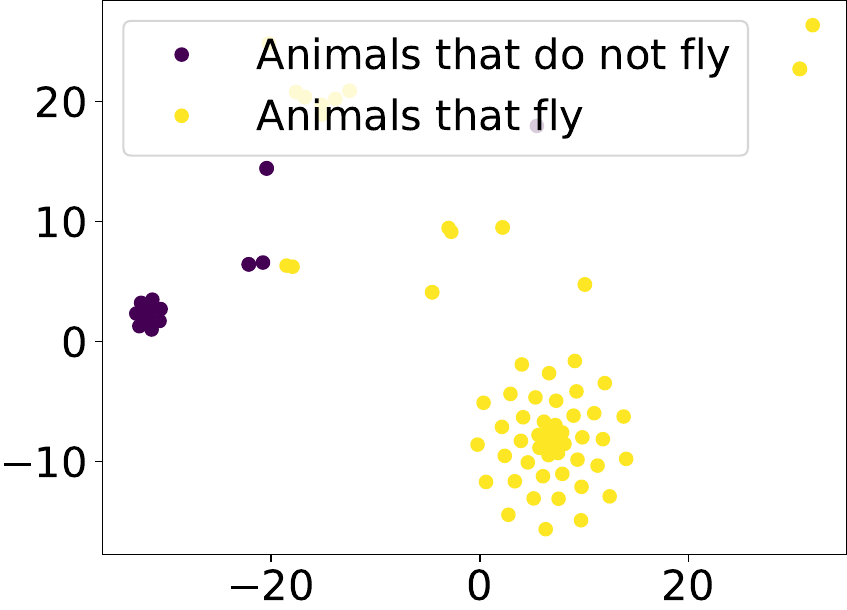}}}
     \vspace{-0.1cm}
    \caption{(a) Comparision of BERT, BioBERT, and PharmBERT on formal context learning on disease-symptom; (b) The conditional distribution between diseases and symptoms learned by BioBERT; (c) The T-SNE visualization of the conceptual embeddings constructed from the learned formal context incidence matrix of animals, where the colors denote whether the animals fly or not.  }
    \label{fig:reconstruction1}
     % \vspace{-0.3cm}
\end{figure}

\begin{table}[t!]
\normalsize
% \vspace{-0.1cm}
\centering
\renewcommand{\arraystretch}{0.8} 
% \vspace{-0.3cm}
\caption{The performance of formal context reconstruction on \textit{Region-language} and \textit{Animal-behavior} by the generated conditional probability of three variants of bidirectional MLMs (BERT), where $-$ denotes trivial results whose numbers are $\le 0.1$. The \textcolor{gray}{shaded} rows are the baseline results. The best and second best results are \textbf{bold} and \underline{underline} numbers, respectively. }
\vspace{-0.1cm}
\resizebox{\columnwidth}{!}
{
\begin{tabular}{cccccccccc}
\toprule
 & & \multicolumn{4}{c}{Region-language} & \multicolumn{4}{c}{Animal-Behavior} \\
 & & MRR ($\uparrow$) &  Hit@1 ($\uparrow$) & Hit@5 ($\uparrow$) & Hit@10 ($\uparrow$) & MRR ($\uparrow$) &  Hit@1 ($\uparrow$) & Hit@5 ($\uparrow$) & Hit@10 ($\uparrow$) \\
\midrule
\multirow{2}{*}{BERT-base} 
& \cellcolor{gray!25}BertEmb. & \cellcolor{gray!25}0.340 & \cellcolor{gray!25}0.249 & \cellcolor{gray!25}0.403 & \cellcolor{gray!25}0.478 & \cellcolor{gray!25}- & \cellcolor{gray!25}- & \cellcolor{gray!25}- & \cellcolor{gray!25}-  \\ 
& BertLattice (avg.) & 0.823 & 0.721 & 0.950 & 0.975 &  \textbf{0.208} & \textbf{0.078} & 0.291 & 0.493  \\
& BertLattice (max.) & \textbf{0.842} & \textbf{0.751} & 0.945 & 0.975 & \underline{0.207} & \textbf{0.078} & 0.288 & 0.493\\
\midrule
\multirow{2}{*}{BERT-distill} 
& \cellcolor{gray!25}BertEmb. & \cellcolor{gray!25}0.094 & \cellcolor{gray!25}0.002 & \cellcolor{gray!25}0.109 & \cellcolor{gray!25}0.214 & \cellcolor{gray!25}- & \cellcolor{gray!25}- & \cellcolor{gray!25}- & \cellcolor{gray!25}-\\ 
& BertLattice (avg.) & \underline{0.838} & \underline{0.746} & 0.945 & \textbf{0.975} & \underline{0.207} & 0.067 & \underline{0.301} & \textbf{0.537} \\
& BertLattice (max.)  & 0.831 & 0.736 & 0.945 & \underline{0.970} & 0.206 & 0.068 & \textbf{0.310} & \underline{0.519} \\
\midrule
\multirow{2}{*}{BERT-large}  
& \cellcolor{gray!25}BertEmb. & \cellcolor{gray!25}0.213 & \cellcolor{gray!25}0.114 & \cellcolor{gray!25}0.249 & \cellcolor{gray!25}0.383 & \cellcolor{gray!25}- & \cellcolor{gray!25}- & \cellcolor{gray!25}- & \cellcolor{gray!25}-\\ 
& BertLattice (avg.) & 0.833 & 0.731 & \textbf{0.970} & \textbf{0.975} & 0.194 & 0.067 & 0.276 & 0.453 \\
& BertLattice (max.)  & 0.819 & 0.706 & \underline{0.960} & \textbf{0.975} & 0.192 & \underline{0.070} & 0.267 & 0.459 \\
\bottomrule
\end{tabular}
}
\vspace{-0.3cm}
\label{tb:fcl}
\end{table}

\subsection{Can conditional probability in MLMs recover formal context?}\label{sec:formalcontextconstruction}

We evaluate the capability of MLMs on reconstructing formal contexts from the conditional probabilities by comparing the recovered formal contexts with gold standards. We formalize the problem as a ranking problem and use ranking-based metrics commonly used in ranking problems, specifically mean reciprocal rank (MRR) and hit@k:
$\text{MRR} = \frac{1}{N} \sum_{i=1}^{N} \frac{1}{\text{rank}_i}, \quad \text{hit@k} = \frac{1}{N} \sum_{i=1}^{N} \text{hit}_i.$ 
We consider three variants of BERT models: BERT-distill, BERT-base, and BERT-large. 
For each model, we use the \emph{uncased} versions and compare the Average pooling and Max pooling variants, denoted as BertLattice (avg.) and BertLattice (max.), respectively.

As Table \ref{tb:fcl} shows, all variants of BertLattice perform significantly better than BertEmb. on both \textit{Region-language} and \textit{Animal-behavior} datasets. 
BERT-base performs best in terms of MRR and Hit@1 while BERT-distill and BERT-large outperform BERT-base on \textit{Animal-behavior} and \textit{Region-language} in terms of Hit@5 and Hit@10, respectively.
The average results on \textit{Region-language} surpass those of \textit{Animal-behavior}. 
We hypothesize that this is due to the objects and attributes in \textit{Region-language} being more frequently observed in Wikipedia, where BERT is pretrained. This hypothesis is supported by the observation in Fig. \ref{fig:reconstruction1}(a) showing BERT's relatively lower performance on the domain-specific \textit{Disease-symptom} dataset, while BioBERT and PharmBERT, pretrained on biomedical texts, significantly outperform BERT-base. Additionally, results on popular diseases are better than those on other diseases, suggesting that observation frequency significantly influences MLMs' ability to learn correct object-attribute correspondences. 
Fig. \ref{fig:reconstruction1}(b) shows part of the \textit{Disease-symptom} formal context extracted from BioBERT. 
BioBERT accurately identifies symptoms for common diseases, such as \emph{Asthma} with symptoms \emph{Cough} and \emph{chest pain}, and \emph{Cold} with symptom \emph{chills}. Full visualizations of the constructed formal contexts are available in the appendix.

\begin{figure}[t!]
    \vspace{-0.3cm}
    \centering 
    \subfloat[\centering ]{{\includegraphics[width=0.29\columnwidth]{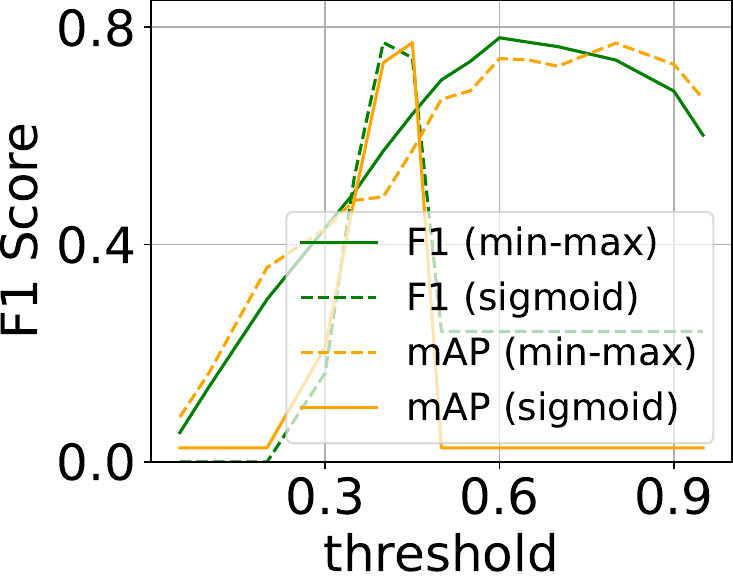}}}
    \subfloat[\centering ]{{\includegraphics[width=0.35\columnwidth]{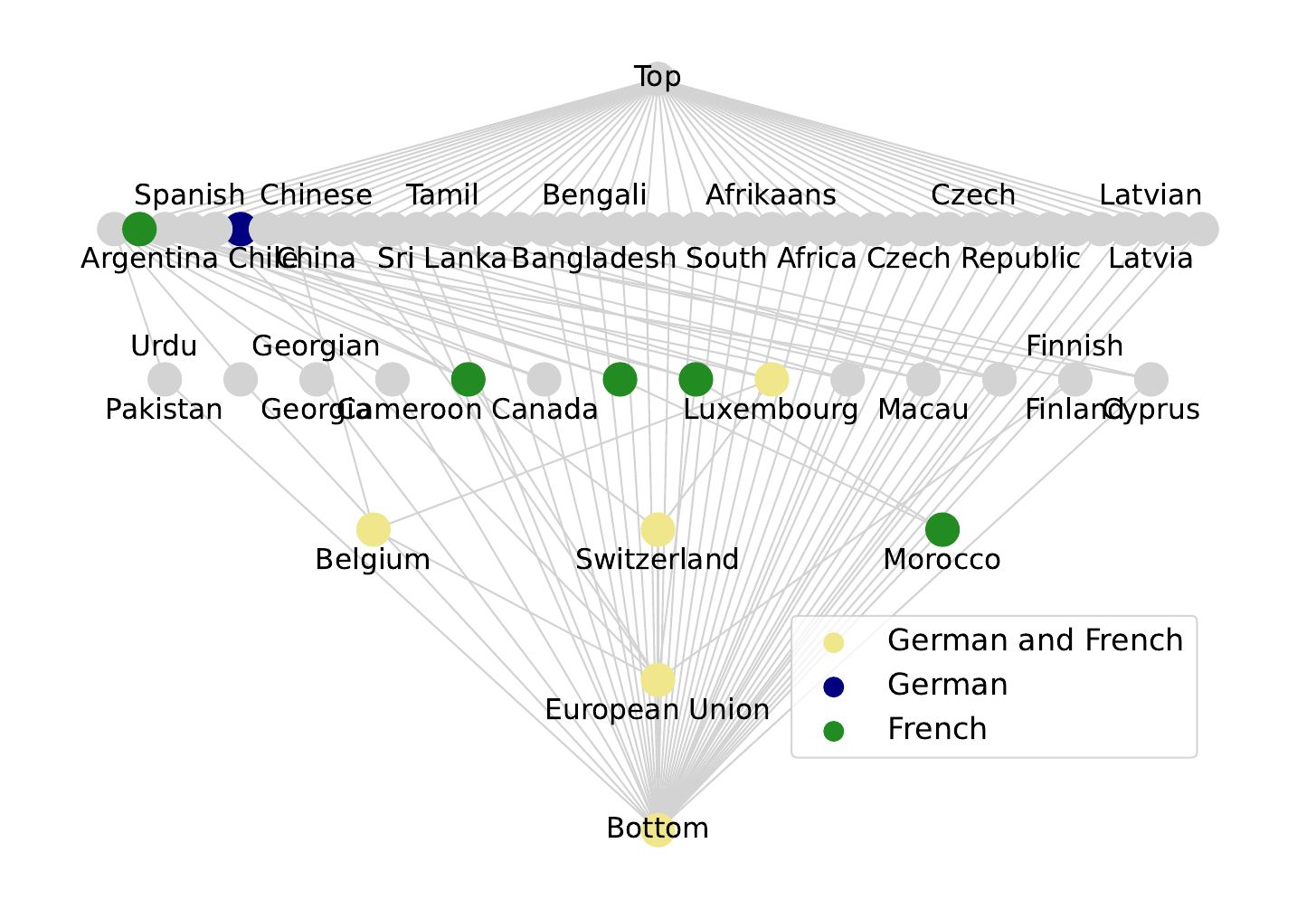}}}
    \subfloat[\centering ]{{\includegraphics[width=0.27\columnwidth]{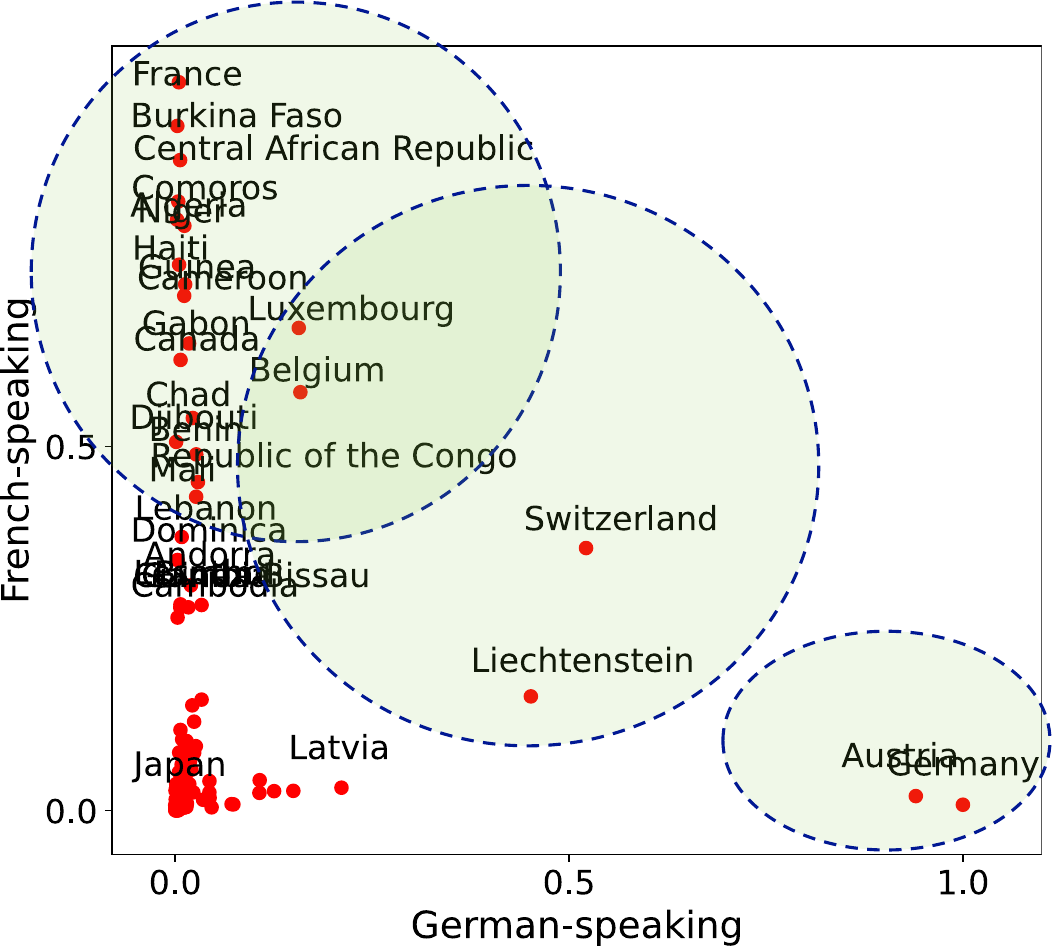}}}
    \vspace{-0.2cm}
    \caption{(a) Comparison of different normalization approaches for lattice construction under different thresholds; (b) The reconstructed \textit{Region-language} concept lattice. For visualization, only a part of the concepts and the corresponding objects are labeled. We also highlight German and French speaking lattice paths; (c) The conceptual embeddings of regions/countries in the dimension of German-speaking and French-speaking.}
    \label{fig:reconstruction2}
     % \vspace{-0.3cm}
\end{figure}

\begin{table}
\normalsize
\centering
\vspace{-0.1cm}
\setlength{\tabcolsep}{0.5em}
\caption{The performance (F1 score and mAP) on concept classification, where $-$ denotes trivial results whose numbers are $\le 0.1$. The \textcolor{gray}{shaded} rows denote the baseline results. The best and second best results are \textbf{bold} and \underline{underline} numbers, respectively. }
\vspace{-0.1cm}
\renewcommand{\arraystretch}{0.65} 
% \vspace{-0.3cm}
% \resizebox{\columnwidth}{!}
% {
\begin{tabular}{cccccccc}
\toprule
 &  & \multicolumn{2}{c}{Region-language} & \multicolumn{2}{c}{Animal-behavior} & \multicolumn{2}{c}{Disease-symptom} \\
 & & F1 ($\uparrow$) &  mAP ($\uparrow$)  & F1 ($\uparrow$) &  mAP ($\uparrow$) & F1 ($\uparrow$) &  mAP ($\uparrow$)  \\
\midrule
\multirow{3}{*}{BERT-base} 
& \cellcolor{gray!25}BertEmb. & \cellcolor{gray!25}0.214 & \cellcolor{gray!25}0.027 & \cellcolor{gray!25}0.269 & \cellcolor{gray!25}0.393  & \cellcolor{gray!25}-  & \cellcolor{gray!25}- \\
& BertLattice (avg.) & 0.630 & 0.528 & \underline{0.643} & 0.388 &  - & - \\
& BertLattice (max.) & \underline{0.703} & \underline{0.667} & 0.620 & \underline{0.413} &  - & - \\
\midrule
\multirow{2}{*}{BERT-distill} 
& \cellcolor{gray!25}BertEmb. & \cellcolor{gray!25}0.207 & \cellcolor{gray!25}0.026 & \cellcolor{gray!25}0.292 & \cellcolor{gray!25}0.382 & \cellcolor{gray!25}-  & \cellcolor{gray!25}- \\
& BertLattice (avg.) & 0.249 & 0.291 & \textbf{0.651} & 0.396 &  - & -  \\
& BertLattice (max.)  & \underline{0.703} & \underline{0.667} & 0.617 & \textbf{0.417} &  - & - \\
\midrule
\multirow{2}{*}{BERT-large}  
& \cellcolor{gray!25}BertEmb. & \cellcolor{gray!25}0.231 & \cellcolor{gray!25}0.028 & \cellcolor{gray!25}0.395 & \cellcolor{gray!25}0.395  & \cellcolor{gray!25}-  & \cellcolor{gray!25}- \\
& BertLattice (avg.) & 0.673 & 0.635 & 0.641 & 0.389 &  - & - \\
& BertLattice (max.)  & \textbf{0.709} & \textbf{0.701} & 0.636 & 0.407 &  - & -  \\
\midrule
\multirow{2}{*}{BioBERT}  
& \cellcolor{gray!25}BertEmb. & \cellcolor{gray!25}- & \cellcolor{gray!25}-  & \cellcolor{gray!25}0.632 & \cellcolor{gray!25}0.385 & \cellcolor{gray!25}0.345  & \cellcolor{gray!25}0.256 \\
& BertLattice (avg.) & - & -  & 0.598 & 0.391 & \underline{0.601} & \underline{0.377}  \\
& BertLattice (max.)  & - & - & 0.631 & 0.382 & \textbf{0.611} & \textbf{0.389}  \\
\bottomrule
\end{tabular}
\label{tb:concept_classification}
 \vspace{-0.5cm}
\end{table}

\subsection{Can the reconstructed formal contexts identify concepts correctly?}\label{sec:concept lattices}

We now evaluate whether the formal contexts reconstructed from the conditional probabilities can be used to identify concepts. 
We frame this as a \emph{multilabel concept classification} task, aiming to identify the correct concepts given an object. 
Specifically, we classify objects by determining whether they possess all the attributes of a given concept, which is done by measuring the conditional probability under the given patterns. 
We use F1 score and mean Average Precision (mAP) as metrics, which balance precision and recall and are commonly used in multilabel classification \citep{DBLP:conf/nips/XiongCNS22}. 

Table \ref{tb:concept_classification} presents the results of concept classification. 
Generally, we find that all variants of BertLattice outperform BertEmb. on all three datasets. 
The BERT-base, BERT-distill, and BERT-large models perform well on the \textit{Region-language} and \textit{Animal-behavior} datasets but not as well on the \textit{Disease-symptom} dataset.
We hypothesize that this is because these BERT variants are not pretrained on texts involving disease-symptom information. 
This hypothesis is supported by the observation that BioBERT variants achieve more reasonable results in the \textit{Disease--symptom} data set.
An interesting exception is that BioBERT also performs well on the \textit{Animal-behavior} dataset.

\subsection{Ablation Studies and Visualization}\label{sec:analysis}

\paragraph{Influence of normalization \& threshold}
We investigate the impact of different normalization methods on the performance of lattice construction from the conditional probability. We compare two normalization techniques: min-max normalization, which normalizes at the matrix level, and sigmoid normalization, which normalizes at the element-wise level. As shown in Fig. \ref{fig:reconstruction2}(a), sigmoid normalization achieves optimal results when the threshold $\alpha \approx 0.5$, whereas min-max normalization performs best when $\alpha \approx 0.6$. Notably, different from the sigmoid normalization, further increasing the threshold of the min-max normalization does not significantly degrade performance, highlighting the robustness and advantages of min-max normalization.

\paragraph{Visualization}
Fig. \ref{fig:reconstruction2}(b) visualizes the concept lattice constructed from the \textit{Region-language} datasets. The model discovers some latent concepts that humans have not predefined. For example, it identifies the concept of "German and French speaking regions," which includes Switzerland, Luxembourg, Belgium, etc. These are parent classes of the European Union because it includes regions where both German and French, among other languages, are spoken. Importantly, this \textit{Region-language} concept lattice is not predefined by human ontology. Fig. \ref{fig:reconstruction2}(c) and Fig. \ref{fig:reconstruction1}(c) further illustrate the conceptual embeddings derived from the conditional distributions of MLMs. The first figure shows the T-SNE embeddings and demonstrates that the flyable animals and non-flyable animals are clearly distinguished. 
Fig. \ref{fig:reconstruction1}(c) shows that the "German and French speaking regions" concept corresponds to a set of countries that speak both German and French in the embedding space.

\section{Related Work}

\paragraph{Conceptual knowledge in language models}
Pretrained MLMs have demonstrated capabilities in capturing conceptual knowledge \citep{DBLP:conf/acl/WuJJXT23,DBLP:conf/acl/LinN22}. A variety of methods have been proposed to probe MLMs for such conceptual knowledge. Most of them use binary probing classifiers \citep{DBLP:conf/acl/AspillagaMS21,DBLP:conf/emnlp/MichaelBT20} or hierarchical clustering \citep{DBLP:conf/naacl/SajjadDDAK022,DBLP:conf/eacl/HawaslyDD24} to identify concepts and validate these against established human-defined ontologies like WordNet \citep{DBLP:journals/cacm/Miller95}. 
Despite these methods offering empirical insights into the conceptual structures encoded by MLMs, they predominantly explore what MLMs learn without addressing how the masked pretraining objectives facilitate the encoding of these conceptual structures.
Moreover, it is argued that MLMs can develop novel concepts that do not align strictly with existing human-defined ontologies \citep{DBLP:conf/iclr/DalviKAD0S22}. This implies that traditional methods may not capture the full extent of conceptual understanding that MLMs are capable of. From a theoretical perspective, recent studies suggest that language models represent concepts through distinct directions in the latent spaces of their hidden activations \citep{DBLP:journals/corr/abs-2311-08968}. 
This observation aligns with the linear representation hypothesis, implying that attributes or features are represented as directional vectors within a model’s hidden activations \citep{DBLP:journals/corr/abs-2209-10652,park2023linear}. 
% Thus, concepts may be understood as superpositions of these attribute directions.
Our work is highly different from previous research, as we consider a formal definition of concepts without any reliance on human-defined concepts and ontologies, and we leverage a mathematical framework (FCA) to analyze concepts.

\section{Conclusion}

This paper investigates \textit{how conceptualization emerges from the pretraining of MLMs} through the lens of FCA, a mathematical framework for deriving concept lattices from observations of object-attribute relationships. 
We demonstrate that MLMs implicitly learn the conditional dependencies between objects and attributes under certain patterns, which can be interpreted as a probabilistic formal context that facilitates the reconstruction of the underlying concept lattice. 
We then propose a novel framework for lattice construction and discuss the origin of the inductive bias for lattice structure learning. Notably, our analysis of the conceptualization capabilities of MLMs does not rely on predefined human concepts and ontologies, allowing for the discovery of latent concepts inherent in natural language.
Our empirical findings on three new datasets support our hypothesis.

\textbf{Limitations and future work }
Currently, our FCA framework of MLMs is limited to single-relational data. 
An immediate future work is to extend FCA to multi-relational data by using multi-relational concept analysis \citep{wajnberg2021fca}. 
For instance, a "Pizza lattice" could be constructed based on both the toppings and sauces of pizzas. 
Our lattice analysis could also be adapted to autoregressive language models, like GPT series \citep{ye2023comprehensive}, which predict the next token given all preceding ones. 
However, this adaptation requires the masked token for either objects or attributes to be placed at the end of the concept pattern, which may not fit well in some domains.

%%%%%%%%%%%%%%%%%%%%%%%%%%%%%%%%%%%%%%%%%%%%%%%%%%%%%%%%%%%%
\newpage

\section*{Reproducibility Statement}

Our study does not involve any training or hyperparameter tuning, but instead relies on pre-trained masked language models. We use the PyTorch Transformer library for all model implementations.\footnote{https://pytorch.org/docs/stable/generated/torch.nn.Transformer.html} Our code and datasets are included as supplemental materials and will be made available upon acceptance. Proofs of the lemmas presented in this paper are also provided in the appendix in the supplemental materials.

\section*{Acknowledgement.}
This work was partially funded by the Deutsche Forschungsgemeinschaft (DFG) in the project ``COFFEE'' --- STA 572\_15-2.

\bibliography{iclr2025_conference}

\begin{thebibliography}{30}
\providecommand{\natexlab}[1]{#1}
\providecommand{\url}[1]{\texttt{#1}}
\expandafter\ifx\csname urlstyle\endcsname\relax
  \providecommand{\doi}[1]{doi: #1}\else
  \providecommand{\doi}{doi: \begingroup \urlstyle{rm}\Url}\fi

\bibitem[AlKhamissi et~al.(2022)AlKhamissi, Li, Celikyilmaz, Diab, and Ghazvininejad]{DBLP:journals/corr/abs-2204-06031}
Badr AlKhamissi, Millicent Li, Asli Celikyilmaz, Mona~T. Diab, and Marjan Ghazvininejad.
\newblock A review on language models as knowledge bases.
\newblock \emph{CoRR}, abs/2204.06031, 2022.

\bibitem[Anandkumar et~al.(2011)Anandkumar, Tan, Huang, and Willsky]{DBLP:journals/corr/abs-1107-1736}
Animashree Anandkumar, Vincent Y.~F. Tan, Furong Huang, and Alan~S. Willsky.
\newblock High-dimensional structure estimation in ising models: Tractable graph families.
\newblock \emph{CoRR}, abs/1107.1736, 2011.

\bibitem[Aspillaga et~al.(2021)Aspillaga, Mendoza, and Soto]{DBLP:conf/acl/AspillagaMS21}
Carlos Aspillaga, Marcelo Mendoza, and Alvaro Soto.
\newblock Inspecting the concept knowledge graph encoded by modern language models.
\newblock In \emph{{ACL/IJCNLP} (Findings)}, volume {ACL/IJCNLP} 2021 of \emph{Findings of {ACL}}, pp.\  2984--3000. Association for Computational Linguistics, 2021.

\bibitem[Chanin et~al.(2024)Chanin, Hunter, and Camburu]{DBLP:journals/corr/abs-2311-08968}
David Chanin, Anthony Hunter, and Oana{-}Maria Camburu.
\newblock Identifying linear relational concepts in large language models.
\newblock \emph{NAACL}, abs/2311.08968, 2024.

\bibitem[Cimiano et~al.(2005)Cimiano, Hotho, and Staab]{DBLP:journals/jair/CimianoHS05}
Philipp Cimiano, Andreas Hotho, and Steffen Staab.
\newblock Learning concept hierarchies from text corpora using formal concept analysis.
\newblock \emph{J. Artif. Intell. Res.}, 24:\penalty0 305--339, 2005.

\bibitem[Dalvi et~al.(2022)Dalvi, Khan, Alam, Durrani, Xu, and Sajjad]{DBLP:conf/iclr/DalviKAD0S22}
Fahim Dalvi, Abdul~Rafae Khan, Firoj Alam, Nadir Durrani, Jia Xu, and Hassan Sajjad.
\newblock Discovering latent concepts learned in {BERT}.
\newblock In \emph{{ICLR}}. OpenReview.net, 2022.

\bibitem[Devlin et~al.(2019)Devlin, Chang, Lee, and Toutanova]{DBLP:conf/naacl/DevlinCLT19}
Jacob Devlin, Ming{-}Wei Chang, Kenton Lee, and Kristina Toutanova.
\newblock {BERT:} pre-training of deep bidirectional transformers for language understanding.
\newblock In \emph{{NAACL-HLT} {(1)}}, pp.\  4171--4186. Association for Computational Linguistics, 2019.

\bibitem[Elhage et~al.(2022)Elhage, Hume, Olsson, Schiefer, Henighan, Kravec, Hatfield{-}Dodds, Lasenby, Drain, Chen, Grosse, McCandlish, Kaplan, Amodei, Wattenberg, and Olah]{DBLP:journals/corr/abs-2209-10652}
Nelson Elhage, Tristan Hume, Catherine Olsson, Nicholas Schiefer, Tom Henighan, Shauna Kravec, Zac Hatfield{-}Dodds, Robert Lasenby, Dawn Drain, Carol Chen, Roger Grosse, Sam McCandlish, Jared Kaplan, Dario Amodei, Martin Wattenberg, and Christopher Olah.
\newblock Toy models of superposition.
\newblock \emph{CoRR}, abs/2209.10652, 2022.

\bibitem[Ganter et~al.(2003)Ganter, Stumme, and Wille]{ganter2003formal}
Bernhard Ganter, Gerd Stumme, and R~Wille.
\newblock Formal concept analysis: Methods, and applications in computer science.
\newblock \emph{TU: Dresden, Germany}, 2003.

\bibitem[Ganter et~al.(2005)Ganter, Stumme, and Wille]{ganter2005formal}
Bernhard Ganter, Gerd Stumme, and Rudolf Wille.
\newblock \emph{Formal concept analysis: foundations and applications}, volume 3626.
\newblock springer, 2005.

\bibitem[Hawasly et~al.(2024)Hawasly, Dalvi, and Durrani]{DBLP:conf/eacl/HawaslyDD24}
Majd Hawasly, Fahim Dalvi, and Nadir Durrani.
\newblock Scaling up discovery of latent concepts in deep {NLP} models.
\newblock In \emph{{EACL} {(1)}}, pp.\  793--806. Association for Computational Linguistics, 2024.

\bibitem[Ho et~al.(2018)Ho, Stepanova, Gad{-}Elrab, Kharlamov, and Weikum]{DBLP:conf/semweb/HoSGKW18}
Vinh~Thinh Ho, Daria Stepanova, Mohamed~H. Gad{-}Elrab, Evgeny Kharlamov, and Gerhard Weikum.
\newblock Rule learning from knowledge graphs guided by embedding models.
\newblock In \emph{{ISWC} {(1)}}, volume 11136 of \emph{Lecture Notes in Computer Science}, pp.\  72--90. Springer, 2018.

\bibitem[J{\"{a}}schke et~al.(2006)J{\"{a}}schke, Hotho, Schmitz, Ganter, and Stumme]{DBLP:conf/icdm/JaschkeHSGS06}
Robert J{\"{a}}schke, Andreas Hotho, Christoph Schmitz, Bernhard Ganter, and Gerd Stumme.
\newblock {TRIAS} - an algorithm for mining iceberg tri-lattices.
\newblock In \emph{{ICDM}}, pp.\  907--911. {IEEE} Computer Society, 2006.

\bibitem[Kamphuis \& Sarbo(1998)Kamphuis and Sarbo]{DBLP:conf/conll/KamphuisS98}
Vera Kamphuis and Janos~J. Sarbo.
\newblock Natural language concept analysis.
\newblock In \emph{CoNLL}, pp.\  205--214. {ACL}, 1998.

\bibitem[Lin \& Ng(2022)Lin and Ng]{DBLP:conf/acl/LinN22}
Ruixi Lin and Hwee~Tou Ng.
\newblock Does {BERT} know that the {IS-A} relation is transitive?
\newblock In \emph{{ACL} {(2)}}, pp.\  94--99. Association for Computational Linguistics, 2022.

\bibitem[Michael et~al.(2020)Michael, Botha, and Tenney]{DBLP:conf/emnlp/MichaelBT20}
Julian Michael, Jan~A. Botha, and Ian Tenney.
\newblock Asking without telling: Exploring latent ontologies in contextual representations.
\newblock In \emph{{EMNLP} {(1)}}, pp.\  6792--6812. Association for Computational Linguistics, 2020.

\bibitem[Miller(1995)]{DBLP:journals/cacm/Miller95}
George~A. Miller.
\newblock Wordnet: {A} lexical database for english.
\newblock \emph{Commun. {ACM}}, 38\penalty0 (11):\penalty0 39--41, 1995.

\bibitem[Mohan et~al.(2012)Mohan, Chung, Han, Witten, Lee, and Fazel]{DBLP:conf/nips/MohanCHWLF12}
Karthik Mohan, Michael~Jae{-}Yoon Chung, Seungyeop Han, Daniela~M. Witten, Su{-}In Lee, and Maryam Fazel.
\newblock Structured learning of gaussian graphical models.
\newblock In \emph{{NIPS}}, pp.\  629--637, 2012.

\bibitem[Park et~al.(2024)Park, Choe, and Veitch]{park2023linear}
Kiho Park, Yo~Joong Choe, and Victor Veitch.
\newblock The linear representation hypothesis and the geometry of large language models.
\newblock \emph{ICML}, 2024.

\bibitem[Peng et~al.(2022)Peng, Wang, Hu, Jin, Hou, Li, Liu, and Liu]{DBLP:conf/emnlp/PengWHJ0L0022}
Hao Peng, Xiaozhi Wang, Shengding Hu, Hailong Jin, Lei Hou, Juanzi Li, Zhiyuan Liu, and Qun Liu.
\newblock {COPEN:} probing conceptual knowledge in pre-trained language models.
\newblock In \emph{{EMNLP}}, pp.\  5015--5035. Association for Computational Linguistics, 2022.

\bibitem[Petroni et~al.(2019)Petroni, Rockt{\"{a}}schel, Riedel, Lewis, Bakhtin, Wu, and Miller]{DBLP:conf/emnlp/PetroniRRLBWM19}
Fabio Petroni, Tim Rockt{\"{a}}schel, Sebastian Riedel, Patrick S.~H. Lewis, Anton Bakhtin, Yuxiang Wu, and Alexander~H. Miller.
\newblock Language models as knowledge bases?
\newblock In \emph{{EMNLP/IJCNLP} {(1)}}, pp.\  2463--2473. Association for Computational Linguistics, 2019.

\bibitem[Sajjad et~al.(2022)Sajjad, Durrani, Dalvi, Alam, Khan, and Xu]{DBLP:conf/naacl/SajjadDDAK022}
Hassan Sajjad, Nadir Durrani, Fahim Dalvi, Firoj Alam, Abdul~Rafae Khan, and Jia Xu.
\newblock Analyzing encoded concepts in transformer language models.
\newblock In \emph{{NAACL-HLT}}, pp.\  3082--3101. Association for Computational Linguistics, 2022.

\bibitem[Stock(2010)]{DBLP:journals/jasis/Stock10}
Wolfgang~G. Stock.
\newblock Concepts and semantic relations in information science.
\newblock \emph{J. Assoc. Inf. Sci. Technol.}, 61\penalty0 (10):\penalty0 1951--1969, 2010.

\bibitem[Van~Aken et~al.(2019)Van~Aken, Winter, L{\"o}ser, and Gers]{van2019does}
Betty Van~Aken, Benjamin Winter, Alexander L{\"o}ser, and Felix~A Gers.
\newblock How does bert answer questions? a layer-wise analysis of transformer representations.
\newblock In \emph{Proceedings of the 28th ACM international conference on information and knowledge management}, pp.\  1823--1832, 2019.

\bibitem[Wajnberg et~al.(2021)Wajnberg, Valtchev, Lezoche, Blondin-Mass{\'e}, and Panetto]{wajnberg2021fca}
Micka{\"e}l Wajnberg, Petko Valtchev, Mario Lezoche, Alexandre Blondin-Mass{\'e}, and Herv{\'e} Panetto.
\newblock Fca went (multi-) relational, but does it make any difference?
\newblock In \emph{9th workshop" What Can FCA Do for Artificial Intelligence?" colacated with 30th International Joint Conference on Artificial Intelligence, IJCAI-21}, volume 2972, pp.\  27--38, 2021.

\bibitem[Wang \& Cho(2019)Wang and Cho]{DBLP:journals/corr/abs-1902-04094}
Alex Wang and Kyunghyun Cho.
\newblock {BERT} has a mouth, and it must speak: {BERT} as a markov random field language model.
\newblock \emph{CoRR}, abs/1902.04094, 2019.

\bibitem[Wu et~al.(2023)Wu, Jiang, Jiang, Xie, and Tu]{DBLP:conf/acl/WuJJXT23}
Weiqi Wu, Chengyue Jiang, Yong Jiang, Pengjun Xie, and Kewei Tu.
\newblock Do plms know and understand ontological knowledge?
\newblock In \emph{{ACL} {(1)}}, pp.\  3080--3101. Association for Computational Linguistics, 2023.

\bibitem[Xiong et~al.(2022)Xiong, Cochez, Nayyeri, and Staab]{DBLP:conf/nips/XiongCNS22}
Bo~Xiong, Michael Cochez, Mojtaba Nayyeri, and Steffen Staab.
\newblock Hyperbolic embedding inference for structured multi-label prediction.
\newblock In \emph{NeurIPS}, 2022.

\bibitem[Ye et~al.(2023)Ye, Chen, Xu, Zu, Shao, Liu, Cui, Zhou, Gong, Shen, et~al.]{ye2023comprehensive}
Junjie Ye, Xuanting Chen, Nuo Xu, Can Zu, Zekai Shao, Shichun Liu, Yuhan Cui, Zeyang Zhou, Chao Gong, Yang Shen, et~al.
\newblock A comprehensive capability analysis of gpt-3 and gpt-3.5 series models.
\newblock \emph{arXiv preprint arXiv:2303.10420}, 2023.

\bibitem[Zhang \& Hashimoto(2021)Zhang and Hashimoto]{DBLP:conf/naacl/ZhangH21}
Tianyi Zhang and Tatsunori Hashimoto.
\newblock On the inductive bias of masked language modeling: From statistical to syntactic dependencies.
\newblock In \emph{{NAACL-HLT}}, pp.\  5131--5146. Association for Computational Linguistics, 2021.

\end{thebibliography}
\bibliographystyle{iclr2025_conference}

% \appendix
% \section{Appendix}
% You may include other additional sections here.
\newpage
\section*{Appendix}
\setcounter{lemma}{0}
\setcounter{algorithm}{0}

\subsection*{Supplemental results}

\paragraph{Computational resources} 
All experiments were conducted on machines equipped with 4 Nvidia A100 GPU. Our method do not require training of MLMs. The extraction of the formal contexts for our datasets is very fast ($\le 60$ seconds).

\begin{table}[]
    \centering
    \caption{Statistics of the used datasets, where density denotes the percentages of positive responses. }
    \resizebox{\columnwidth}{!}
    {
    \begin{tabular}{c|ccccc}
        \toprule
        & \#objects & \#attributes & density & object token & attribute token  \\
        \midrule
        Region-language & 165 & 45 & 2.59 & single or multi-token & single-token \\
        Animal-behavior & 354 & 25 & 40.3 & single-token & multi-token \\
        Disease-symptom & 122 & 33 & 6.76 & single-token & single or multi-token \\
        \bottomrule
    \end{tabular}
    }
    \label{tab:datasets}
\end{table}

\begin{figure*}[]
    \centering
    \includegraphics[width=1.0\linewidth]{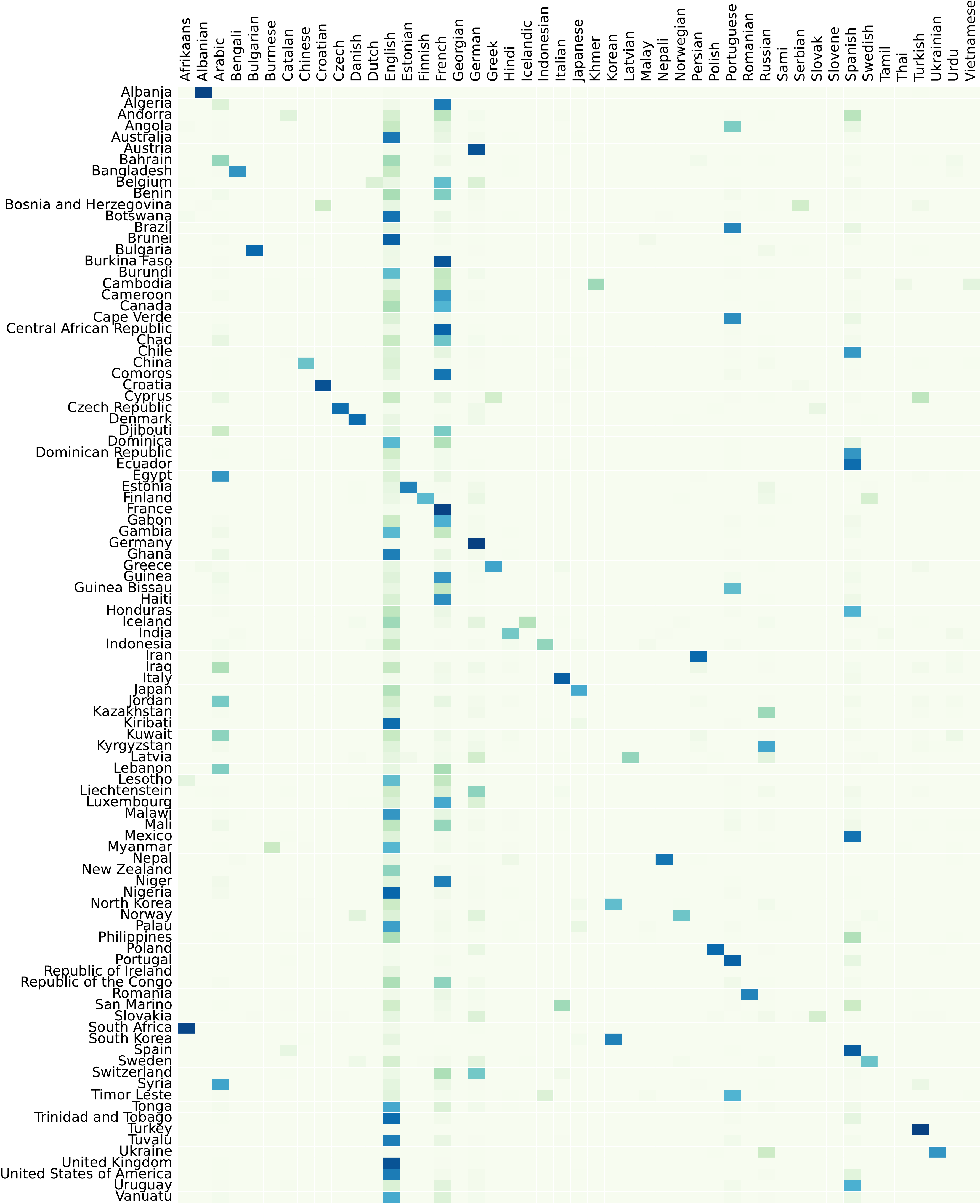}
    \caption{The normalized conditional probability of regions and their official language. The probability is generated by the cloze prompt \emph{"[object] is the official language of [attribute]"}. }
    \label{fig:animal-fc-69}
\end{figure*}

% \setcounter{corollary}{0}

% \subsection{Relation to ontology learning from natural language.} 
% According to the Peirce's semeiotic theory of signs \cite{sarbo1999formal} and linguistic knowledge about syntactic clusters, ontological structures of concepts can be derived automatically from natural language. 
% Many approaches have been proposed to automatically extract relevant concepts and their relationships from texts, called ontology learning \cite{DBLP:journals/biodb/AsimWKMA18}. Most of them are based on lexico-syntactic pattern mining and clustering \cite{DBLP:conf/semweb/MissikoffNV02,DBLP:conf/ictai/KhanL02}. One of the most relevant work is \cite{DBLP:journals/jair/CimianoHS05}, which also learns concept lattice from text through FCA and follows the distributional assumption of conceptualization. Those tranditional approaches have recently been boosted by prompting language models \cite{DBLP:conf/semweb/GiglouDA23,DBLP:conf/acl/WuJJXT23}. However, they follow the definitional hypothesis and rely on human-defined concepts and ontologies. To the best of our knowledge, we are the first to study the conceptualization of MLMs through FCA. 

\subsection*{Proof of lemmas}
% \steffen{Better count "A.1 Proof..." etc., not ".1 Proof..."}

% The proofs of the propositions are detailed as follow: 

\label{app:Identifiability}
\begin{lemma}[feasibility]
    Let $\mathcal{D} = \{w^i\}_{i=1}^{N}$ be a dataset consisting of data points generated by Abstraction \ref{abs:data_generation}, then there exists an identification algorithm $F: \mathcal{D} \rightarrow I$ such that $F(\mathcal{D})$ converges to the ground-truth formal context $I_0$ as $N \rightarrow \infty$ almost surely, i.e., $F(\mathcal{D}) \rightarrow I \approx I_0$. 
\end{lemma}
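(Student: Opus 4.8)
The plan is to instantiate the identification algorithm $F$ as the co-occurrence counting procedure of Algorithm~\ref{alg:cap}, and to show that after normalization and thresholding its output converges to $I_0$ almost surely. First I would fix an arbitrary pair $(g,m) \in G_0 \times M_0$ and observe that, under Abstraction~\ref{abs:data_generation}, processing a single data point $w^i = b^{g_i,m_i}$ increments the counter at position $(g,m)$ precisely when $(g_i,m_i) = (g,m)$ — provided, as is natural, that the fixed scaffolding tokens of the pattern $b$ do not themselves lie in $G_0 \cup M_0$, so that no spurious co-occurrences are recorded. Hence the unnormalized entry $C_N(g,m) := \sum_{i=1}^{N} \mathbf{1}[(g_i,m_i) = (g,m)]$ is a sum of i.i.d.\ Bernoulli variables with success probability $\pi(g,m)$, the mass that the generating distribution on $I_0$ assigns to that pair.

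By the strong law of large numbers, $\frac{1}{N} C_N(g,m) \to \pi(g,m)$ almost surely for each of the finitely many pairs, and intersecting these finitely many probability-one events shows that the normalized matrix converges entrywise, almost surely, to the matrix $\Pi$ with entries $\pi(g,m)$. I would then invoke the key structural fact that the support of the sampling distribution is exactly $I_0$: every incidence in $I_0$ is drawn with strictly positive probability, while every pair outside $I_0$ is never drawn. Consequently $\pi(g,m) > 0 \Leftrightarrow (g,m) \in I_0$, so thresholding the limit at any level below $\min_{(g,m) \in I_0} \pi(g,m)$ recovers $I_0$ exactly; the $\approx$ in the statement reflects that the object $F(\mathcal{D})$ converges to is the probabilistic formal context $\Pi$, whose binarization equals $I_0$.

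To make the almost-sure claim fully rigorous and to quantify the rate, I would supplement the SLLN with a concentration argument: Hoeffding's inequality gives $\Pr[|\frac{1}{N} C_N(g,m) - \pi(g,m)| > t] \le 2\exp(-2Nt^2)$, and a union bound over the finite pair set combined with Borel--Cantelli yields uniform almost-sure convergence. Choosing $t$ below half the separation $\min_{(g,m) \in I_0} \pi(g,m)$ then shows that for all sufficiently large $N$ the thresholded estimate coincides with $I_0$ on every entry simultaneously.

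The main obstacle I anticipate is not the probabilistic limit itself but the bookkeeping needed to guarantee clean support recovery. I must argue that (i) the counting loop never records a false co-occurrence from the pattern scaffolding — handled either by assuming disjointness of the pattern tokens from $G_0 \cup M_0$, or by restricting the count to the designated fill positions — and (ii) the generating distribution places positive mass on all of $I_0$, so that no true incidence is asymptotically missed. Both are mild assumptions already implicit in Abstraction~\ref{abs:data_generation}, and once they are stated explicitly the remaining steps reduce to routine applications of the law of large numbers and Hoeffding's inequality.
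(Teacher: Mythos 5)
Your proposal is correct and follows essentially the same route as the paper's proof: instantiate $F$ as the co-occurrence counting algorithm, apply Hoeffding's inequality to each entry, union-bound over the finitely many $(g,m)$ pairs, and invoke Borel--Cantelli for almost-sure convergence. If anything you are more careful than the paper, which writes the limit of the empirical frequencies directly as ${I_0}_{g,m}$ rather than as the sampling probability $\pi(g,m)$ whose support must then be thresholded to recover the binary incidence, and which does not address the possibility of spurious co-occurrences from pattern scaffolding tokens; both of your added caveats are genuine gaps in the paper's argument that your version closes.
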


\begin{algorithm}\label{alg:formalcontextlearning}
% \steffen{The algorithm changes notation from x to w. I would prefer one notation. As said before, I would even prefer w, but then in all places}
\caption{A Formal Context Learning Algorithm}\label{alg:cap}
\hspace*{\algorithmicindent} \textbf{Input:} A dataset $\mathcal{D}$, a set of objects $G$ and attributes $M$   \\
\hspace*{\algorithmicindent} \textbf{Output:} An estimated formal context incidence matrix $I = [0,1]^{|G| \times |M|}$
\begin{algorithmic}
\State initialize $I = \mathbf{0}^{|G| \times |M|}$ 
\For{$w \in \mathcal{D}$}
    \For{$w_i \in \mathbf{w}$}
        \For{$w_j \in w$}
            \If{$w_i \in G, w_j \in M$}
                \State $I_{ \left( G_{w_i}, M_{\mathbf{w}_j } \right) } = I_{ \left( G_{w_i}, M_{w_j } \right) } +  1  $
            \EndIf
        \EndFor
    \EndFor
\EndFor
\State normalization: $I = \mathrm{normalize}(I)$
\State return $I$
\end{algorithmic}
\end{algorithm}

% \steffen{$X$ was used as something very different in Def 7 before. What is $\Omega$? What is the sample space? $\omega$ or $\Omega$?. $w_t$ should maybe be $x_t$?}
% \steffen{I do not understand what $I_{0_{g,m}}$ is}
% \steffen{before it was always $Y_{g,m,b}$. Now there is a $t$ instead of a $b$. This suggests to me that the patterns go against infitely many. But how is this guaranteed with $\mathcal{D}$? According to abstraction 1, there are finitely many patterns $B$.}
% \steffen{I get lost in your use of notation, so I do not understand the use of the Hoeffding bound}
\begin{proof}
    The algorithm 1 is constructed to derive a formal context $I$ from $\mathcal{D}$, 
    Our target is to demonstrate that $I$ converges to $I_0$ almost surely when the number of data points $N \rightarrow \infty$.
    Let $\Omega$ denote the sample space, $I_N$ represent the learned formal context from $\mathcal{D}_N$, and $X_N = d(I_N, I_0)$ denote the random variable indexed by $N$. We need to establish that $X_N \xrightarrow[]{\text{a.s.}} 0$.

    Let us define $E_N:=\left\{\omega \in \Omega: X_N(\omega)>\epsilon\right\}$ for $\epsilon >0$, where $\omega$ represents an element of the sample space. 
    Let $Y_{g,m,t}= \begin{cases}1 &  g,m \in x_t \\ 0 & \text{otherwise}\end{cases}$, and consider $P\left(E_N\right) =P\left(\sum_{g \in G, m \in M} \left|\frac{1}{N} \sum_{t=1}^N Y_{g, m, t} - {I_0}_{g,m}\right| > \epsilon\right)$, we have
    $\begin{aligned} 
    P\left(E_N\right) 
    & \stackrel{(a)}{\leq}  P\left(\bigcup_{g \in G, m \in M}\left|\frac{1}{N} \sum_{t=1}^N  Y_{g, m, t} - {I_0}_{g,m}\right| > \frac{\epsilon}{q}\right) \\
    & \stackrel{(b)}{\leq} \sum_{g \in G, m \in M} P\left(\left|\frac{1}{N} \sum_{t=1}^N Y_{g, m, t} - {I_0}_{g,m}\right| > \frac{\epsilon}{q}\right) \\
    & \stackrel{(c)}{\leq} 2 q \exp \left(-\frac{2 N \epsilon^2}{q^2}\right),
    \end{aligned}$
    
    where inequalities (a) and (b) use the union bound, and (c) applies the Hoeffding's inequality. 
    
    By applying the Borel-Cantelli lemma, we have $P\left(\limsup _{N \rightarrow \infty} E_N\right) = 0$. Hence, $P\left(\lim _{N \rightarrow \infty} X_N = 0\right) = 1$. This means that $I$ converges to $I_0$ almost surely when $N \rightarrow \infty$.
\end{proof}

\begin{lemma}[role of pattern]\label{prop:latent_variables}
    Let $|\mathrm{CMI}_{p_\theta}(g;m |b)|$ denote the conditional MI without latent variables and $\mathrm{CMI}_{p}(g;m |\mathbf{Z}; b)$ denote the conditional MI with latent variables $\mathbf{Z}$, we have $|\mathrm{CMI}_{p_\theta}(g;m |b)| - \mathrm{CMI}_{p}(g;m |\mathbf{Z}; b) \le 2H(\mathbf{Z}|b)$, where $H(\mathbf{Z}|b)$ is the conditional entropy.
\end{lemma}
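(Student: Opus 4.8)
The plan is to reduce the claimed inequality to two applications of the standard bound $\mathrm{CMI}(g;\mathbf{Z}\mid C)\le H(\mathbf{Z}\mid C)$, with the factor of $2$ arising from a triangle inequality. First I would rewrite both conditional mutual informations through the entropy identity $\mathrm{CMI}(g;m\mid C)=H(g\mid C)-H(g\mid m,C)$, taking $C=b$ for the term without the latent variable and $C=(\mathbf{Z},b)$ for the term with it. Subtracting the two expressions and regrouping the four conditional-entropy terms gives
\begin{align}
\mathrm{CMI}(g;m\mid b)-\mathrm{CMI}(g;m\mid \mathbf{Z},b)
&=\big[H(g\mid b)-H(g\mid \mathbf{Z},b)\big]-\big[H(g\mid m,b)-H(g\mid m,\mathbf{Z},b)\big]\\
&=\mathrm{CMI}(g;\mathbf{Z}\mid b)-\mathrm{CMI}(g;\mathbf{Z}\mid m,b),
\end{align}
where the last line uses the same entropy identity in reverse, now viewing $\mathbf{Z}$ as the second argument. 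This is exactly the chain-rule rearrangement $\mathrm{CMI}(g;m,\mathbf{Z}\mid b)=\mathrm{CMI}(g;m\mid b)+\mathrm{CMI}(g;\mathbf{Z}\mid m,b)=\mathrm{CMI}(g;\mathbf{Z}\mid b)+\mathrm{CMI}(g;m\mid \mathbf{Z},b)$ written out term by term.

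Next I would bound the right-hand side. Applying the triangle inequality to the signed difference and then using non-negativity of mutual information yields $\mathrm{CMI}(g;\mathbf{Z}\mid b)-\mathrm{CMI}(g;\mathbf{Z}\mid m,b)\le \mathrm{CMI}(g;\mathbf{Z}\mid b)+\mathrm{CMI}(g;\mathbf{Z}\mid m,b)$. I then bound each summand by a conditional entropy: $\mathrm{CMI}(g;\mathbf{Z}\mid b)\le H(\mathbf{Z}\mid b)$ directly, and $\mathrm{CMI}(g;\mathbf{Z}\mid m,b)\le H(\mathbf{Z}\mid m,b)\le H(\mathbf{Z}\mid b)$, where the final step is the fact that conditioning on the extra variable $m$ cannot increase entropy. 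Summing the two bounds gives $2H(\mathbf{Z}\mid b)$, which is the claim. (Note that the sharper chain-rule route would instead give $H(\mathbf{Z}\mid b)$, since $\mathrm{CMI}(g;\mathbf{Z}\mid m,b)\ge 0$ can simply be dropped; the factor $2$ is the price of the more robust triangle-inequality argument, which localizes the dependence on $\mathbf{Z}$ symmetrically.)

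The main obstacle I anticipate is the mismatch of distributions in the statement: the first quantity is written under the model $p_\theta$ and carries an absolute value, while the second is under the reference distribution $p$. The entropy identities and chain rule above are valid only when both terms are computed under a single joint law over $(g,m,\mathbf{Z},b)$. To make the argument rigorous I would either (i) assume $p_\theta$ agrees with $p$ on the conditionals entering the two CMIs, so that a common joint law exists and the decomposition applies verbatim, or (ii) carry an explicit discrepancy term and absorb the model-versus-truth gap into it, using that $|\mathrm{CMI}_{p_\theta}-\mathrm{CMI}_{p}|$ is controlled by a divergence (total-variation or KL) between $p_\theta$ and $p$. The absolute value on the first CMI is then handled by noting that a genuine conditional mutual information is non-negative, so $|\mathrm{CMI}(g;m\mid b)|=\mathrm{CMI}(g;m\mid b)$ once evaluated under a proper joint distribution, and the one-sided inequality in the lemma follows. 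Everything else is bookkeeping with the two standard facts $\mathrm{CMI}\le H$ and ``conditioning reduces entropy.''
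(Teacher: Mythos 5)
Your proof is correct and follows essentially the same route as the paper's: the chain-rule identity $\mathrm{CMI}(g;m\mid b)-\mathrm{CMI}(g;m\mid\mathbf{Z},b)=\mathrm{CMI}(g;\mathbf{Z}\mid b)-\mathrm{CMI}(g;\mathbf{Z}\mid m,b)$ (which the paper writes in the equivalent expanded form with four $H(\mathbf{Z}\mid\cdot)$ terms), followed by the two standard facts $\mathrm{CMI}\le H$ and ``conditioning reduces entropy.'' Your two side remarks---that the bound actually sharpens to $H(\mathbf{Z}\mid b)$ by simply dropping the non-negative subtracted term, and that the $p_\theta$-versus-$p$ mismatch together with the absolute value requires an explicit assumption to identify the two quantities---are both valid observations that the paper's proof silently glosses over.
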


\begin{proof}
    Proposition 3 from \cite{DBLP:conf/naacl/ZhangH21} shows that the dependency between two tokens can be captured by conditional MI. 
    Our lemma can be proved in a similar way by viewing that the objects and attributes are all tokens in the vocabulary. That is $g,m \in V$.
    
    Using the definition of conditional MI, we start with:
    \[
    \mathrm{CMI}_{p}(g;m |\mathbf{Z}; b) = \mathrm{CMI}_{p}(g;m | b) - \mathrm{CMI}_{p}(g;\mathbf{Z} | b) + \mathrm{CMI}_{p}(g;\mathbf{Z} | m, b)
    \]
    Expanding this, we get:
    \[
    \begin{aligned}
    \mathrm{CMI}_{p}(g;m |\mathbf{Z}; b) = & \ \mathrm{CMI}_{p}(g;m | b) + H(\mathbf{Z} | g, b) - H(\mathbf{Z} | b) \\
    & + H(\mathbf{Z} | m, b) - H(\mathbf{Z} | g, m, b).
    \end{aligned}
    \]
    Now, let's consider the difference:
    \[
    \begin{aligned}
    & |\mathrm{CMI}_{p_\theta}(g;m |b)| - \mathrm{CMI}_{p}(g;m |\mathbf{Z}; b) \\
    = & \ \mathrm{CMI}_{p}(g;m | b) - \mathrm{CMI}_{p}(g;m |\mathbf{Z}; b) \\
    = & \ -H(\mathbf{Z} | g, b) + H(\mathbf{Z} | b) - H(\mathbf{Z} | m, b) + H(\mathbf{Z} | g, m, b)
    \end{aligned}
    \]
    Next, apply the inequality properties of entropy:
    \[
    \begin{aligned}
    \mathrm{CMI}_{p}(g;m | b) - \mathrm{CMI}_{p}(g;m |\mathbf{Z}; b) \leq & \ H(\mathbf{Z} | b) + H(\mathbf{Z} | g, m, b)
    \end{aligned}
    \]
    Since entropy is always non-negative, we further have:
    \[
    H(\mathbf{Z} | g, m, b) \leq H(\mathbf{Z} | b)
    \]
    Combining these, we get:
    \[
    \begin{aligned}
    |\mathrm{CMI}_{p_\theta}(g;m |b)| - \mathrm{CMI}_{p}(g;m |\mathbf{Z}; b) \leq & \ H(\mathbf{Z} | b) + H(\mathbf{Z} | g, m, b) \\
    \leq & \ 2H(\mathbf{Z} | b)
    \end{aligned}
    \]
    Therefore, we conclude that:
    \[
    |\mathrm{CMI}_{p_\theta}(g;m |b)| - \mathrm{CMI}_{p}(g;m |\mathbf{Z}; b) \le 2H(\mathbf{Z}|b).
    \]
    This completes the proof.
\end{proof}

% \begin{figure*}
%     \centering
%     \includegraphics[width=1.0\linewidth]{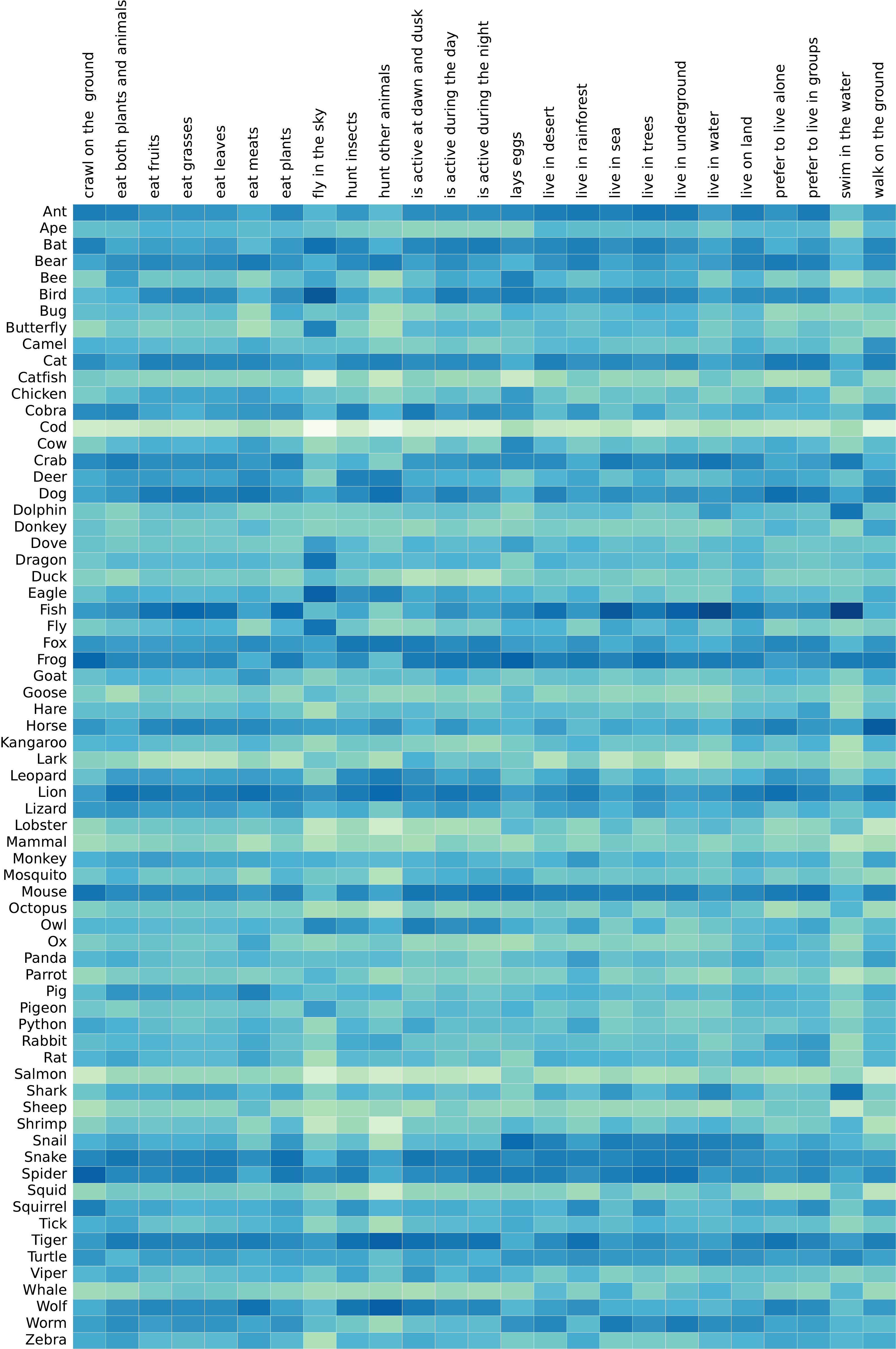}
%     \caption{The normalized conditional probability of a subset of animals and their behaviors. The probability is generated by the cloze prompt \emph{"The [object] is an type of animals that [attribute]"}. }
%     \label{fig:animal-fc-69}
% \end{figure*}

\end{document}